\renewcommand{\epsilon}{\varepsilon}
\def\eqref#1{equation~\ref{#1}}
\def\1{\bm{1}}
\DeclareMathAlphabet{\mathsfit}{\encodingdefault}{\sfdefault}{m}{sl}
\SetMathAlphabet{\mathsfit}{bold}{\encodingdefault}{\sfdefault}{bx}{n}
\newtheorem{theorem}{Theorem}
\newtheorem{lemma}{Lemma}[section]
\newtheorem{definition}{Definition}
\DeclareMathOperator*{\argmin}{arg\,min}        % arg min
\newcommand{\norm}[1]{\left\| #1\right\|}                  % ||a||
\newcommand{\tr}{\mathrm{Tr}}                   % trace
\newcommand{\vol}{\mathrm{vol}}                 % volume
\newcommand{\R}{\mathbb{R}}
\newcommand{\abs}[1]{\left\lvert#1\right\rvert}
\newcommand{\p}{\mathbb{P}}
\definecolor{indiagreen}{rgb}{0.07, 0.53, 0.03}
\renewcommand{\paragraph}[1]{\textbf{#1}\quad}
\newcommand{\diag}[1]{\text{diag}(#1)}
\newcommand{\Diag}[1]{\text{Diag}(#1)}
\newcommand{\Hi}{\mathcal{H}}
\newcommand{\N}{\mathbb{N}}
\renewcommand{\cite}{\citep}
\newcommand{\cost}{\textsc{cost}}
\newcommand{\NCcost}{\textsc{NC}}
\newcommand{\innerprod}[2]{\langle #1, #2 \rangle}
\newcommand{\contributionv}{\texttt{contribution}}
\newcommand{\sample}{\textsc{SamplePoint}}
\newcommand{\FastDzSampling}{\textsc{Fast$D^2$-Sampling}}
\newcommand{\CSCAlg}{\textsc{Coreset Spectral Clustering}}
\newcommand{\ConstructT}{\textsc{Construct}$T$}
\newcommand{\Opt}{\text{OPT}}
\newcommand{\NCopt}{\text{OPTNC}}
\newcommand{\repair}[1]{\textsc{Repair}(#1)}
\newcommand{\Gcomment}[1]{{\color{red}Gregory:  #1}}
\newcounter{datastructures}
\title{Coreset Spectral Clustering}
\author[1]{Ben Jourdan}
\author[2]{Gregory Schwartzman}
\author[3]{Peter Macgregor}
\author[1]{He Sun}
\affil[1]{University of Edinburgh, UK}
\affil[2]{Japan Advanced Institute of Science and Technology (JAIST), Japan}
\affil[3]{University of St Andrews, UK}
\affil[ ]{\texttt{ben.jourdan@ed.ac.uk, greg@jaist.ac.jp, prm4@st-andrews.ac.uk, h.sun@ed.ac.uk}}
\begin{document}

% Editors notes
% \begin{itemize}
%     \item domain $X$ or $\mathcal{X}$? \Bcomment{I was trying to use $\mathcal{X}$ for an arbitrary set and $X$ for a Euclidean set. Let's go with $X$ for both. Happy to just use $X$.}
%     \item kmeans -> $k$-means 
%     \Bcomment{done}
%     \item $D^2$-sampling -> $D^2$-sampling \Bcomment{Done}
%     \item k partition -> $k$ partition \Bcomment{Done}
%     \item check the ICLR template; always use citep or citet comment for citing reference, instead of cite. \Bcomment{Gregory has a macro converting cite into citep}
% \end{itemize}

% \newpage
% \setcounter{page}{1}

\maketitle

\begin{abstract}
Coresets have become an invaluable tool for solving $k$-means and kernel $k$-means clustering problems on large datasets with small numbers of clusters. On the other hand, spectral clustering works well on sparse graphs and has recently been extended to scale efficiently to large numbers of clusters. We exploit the connection between kernel $k$-means and the normalised cut problem to combine the benefits of both. Our main result is a coreset spectral clustering algorithm for graphs that clusters a coreset graph to infer a good labelling of the original graph. We prove that an $\alpha$-approximation for the normalised cut problem on the coreset graph is an $O(\alpha)$-approximation on the original. We also improve the running time of the state-of-the-art coreset algorithm for kernel $k$-means on sparse kernels, from $\Tilde{O}(nk)$ to $\tilde{O}(n\cdot \min \{k, d_{avg}\})$, where $d_{avg}$ is the average number of non-zero entries in each row of the $n\times n$ kernel matrix. Our experiments confirm our coreset algorithm is asymptotically faster on large real-world graphs with many clusters, and show that our clustering algorithm overcomes the main challenge faced by coreset kernel $k$-means on sparse kernels which is getting stuck in local optima.
\end{abstract}

\section{Introduction}
% \peter{Not sure if I've improved anything! I've tried to reduce the emphasis on the cluster preserving sparsifier, and removed some of the technical detail from the first paragraph - this still needs to be included later in the paper.}

Kernel $k$-means and spectral clustering are two popular algorithms which are capable of learning non-linear decision boundaries between clusters.
For this reason, they have been applied to many practical problems in machine learning, including in the fields of medical research and network science \cite{NIPS2014_6c29793a, Kuo,Scott}.

Given a data set $X$, both kernel $k$-means and spectral clustering make use of a kernel similarity function $K: X \times X \rightarrow \mathbb{R}_{\ge 0}$, which is often represented as a matrix of size $n \times n$, where $n=|X|$.
The spectral clustering algorithm considers the kernel matrix to be the adjacency matrix of a \emph{similarity graph} and clusters the nodes of the graph in order to minimise the \emph{normalised cut} objective function, which we define in Section~\ref{sec:preliminaries}~\cite{von2007tutorial}.
Kernel $k$-means exploits the fact that the kernel function implicitly defines an embedding of the data points into a Hilbert space,
represented by $\phi: X \rightarrow \mathcal{H}$, such that for all $x, y \in X$, 
\begin{align}
    \innerprod{\phi(x)}{\phi(y)} = K(x, y). \label{eqn:featuremap}
\end{align}
The kernel $k$-means problem is to minimise the $k$-means objective in this Hilbert space and is usually solved using a generalisation of Lloyds algorithm~\citep{Dhillon04}, using the kernel $K$ to compute inner products.
Remarkably, the normalised cut and kernel $k$-means objectives are equivalent up to a constant factor~\cite{Dhillon04}.

Despite this equivalence, spectral clustering and kernel $k$-means have been largely studied separately, and new techniques have been developed with one or the other in mind.
For spectral clustering, one of the most promising techniques is to construct a sparse similarity graph in order to achieve a speedup.
This can be achieved through constructing an approximate $k$-nearest neighbour graph using fast approximate nearest neighbour algorithms \cite{ALSHAMMARI2021107869,Harwood_2016_CVPR, malkov2018efficient},
or by constructing a clustering-preserving sparsifier of the complete kernel similarity graph~\citep{macgregorfast,CPS}.
By operating on a sparse similarity graph, the time and memory cost of spectral clustering is reduced from $\Omega\left(n^2\right)$ to $O(n \log(n))$.

To speed up kernel $k$-means, \citet{jiang2022coresets} applied a recent coreset result for Euclidean spaces \cite{braverman2021coresets} to kernel spaces. Given a dataset $X$ and kernel function $K$, they showed that an $\varepsilon$-coreset (Definition~\ref{def:coreset}) of size $\Tilde{O}(k^2\epsilon^{-4})$ can be constructed in time $\Tilde{O}(nk)$\footnote{We use $\Tilde{O}(\cdot)$ to suppress polylogarithmic factors.}. The cost of running an iteration of Lloyd's algorithm on the coreset is then independent of $n$, for small $k$.

In this paper, we show that the benefits of the
techniques developed for both spectral clustering
and kernel $k$-means can be combined in order to 
create a faster and more accurate clustering algorithm.
In particular, we show that, by exploiting the sparsity of the kernels usually considered for spectral clustering, it is possible to design a faster algorithm for constructing a coreset.
Then, we apply spectral clustering directly to the coreset graph with only a small number of nodes and use the result to cluster the original graph.
Empirically, we find that on sparse graphs our coreset spectral clustering algorithm has a significantly faster running time than the classical spectral clustering algorithm, and achieves a higher accuracy than coreset kernel $k$-means. %\Gcomment{"orders of magnitude faster?" Maybe express as 100x faster or something of the sort.}

\subsection{Our results}

\paragraph{Faster $k$-means++ and coreset construction.} 
In Section~\ref{sec:fastcoreset}, we exploit the fact that inputs to kernel $k$-means are often sparse to devise a faster algorithm for $k$-means++ initialisation in kernel space. Specifically, when the input kernel matrix is sparse, we improve the running time from $\tilde{O}(nk)$ to $\tilde{O}(n\cdot \min \{k, d_{avg}\})$ where $d_{avg}$ is the average number of non-zero entries in each row of the kernel matrix. Speeding up $k$-means++ has received a lot of attention for Euclidean spaces \cite{bachem2016approximate,NEURIPS2020_babcff88}. This is the first result to provide  speed up in sparse kernel spaces. Combining this   with \citet{jiang2022coresets}, we present the first coreset construction for kernel spaces that makes use of kernel sparsity, and  reduce the coreset construction time from $\tilde{O}(nk)$ to $\tilde{O}(n\cdot \min \{k, d_{avg}\})$ while maintaining the same theoretical guarantees. For large graphs with many clusters, it is critical to break the linear dependence on the number of clusters for practical use.

% \paragraph{Normalised cut guarantees for CPS} Given a dataset $X\subset \R^d$ and kernel function $K$, let the \textit{similarity graph} of $X$ be the weighted complete graph $G=(X,E,w)$ such that $w(x,y)=K(x,y)$ for all $x,y\in X$. Assuming $G$ has a good cluster structure, cluster preserving sparsifiers construct a sparse representation of $G$ directly from $X$ and $K$ while preserving this structure \cite{macgregorfast}, using a linear number of queries to a kernel density estimation solver. In particular, the $k$-way expansion constant of $G$ is preserved by a factor of $O(k)$.  We show a similar guarantee also holds for the normalised cut objective when we modify the algorithm to always sample self loops. 

\paragraph{Coreset spectral clustering.} In Section~\ref{sec:csc}, we  introduce the coreset spectral clustering algorithm that explicitly combines spectral clustering with coresets by exploiting the equivalence between the normalised cut and weighted kernel $k$-means problems. Previous work has used this equivalence to convert spectral clustering problems to kernel $k$-means problems and then solve them using coreset kernel $k$-means \cite{jiang2022coresets}. We propose a new method which solves the coreset problem directly with spectral clustering and then transfers the solution back to the original graph. This sidesteps some of the less desirable properties that running kernel $k$-means would entail, such as its susceptibility to local minima when using indefinite kernels \cite{Dhillon07}. In Theorem \ref{theorem:csc}, we prove that an $\alpha$-approximation of the normalised cut problem on the coreset graph gives an $O(\alpha)$-approximation of the normalised cut problem on the original graph.

\paragraph{Experiments.} 
In Section~\ref{sec:exp}, we preform three experiments to test our coreset construction and coreset spectral clustering algorithms. The first confirms the asymptotic improvement in running time of our coreset construction algorithm for kernel $k$-means over the previous method of \citet{jiang2022coresets} on large real-world graphs with up to 65 million nodes and thousands of clusters.  The second experiment compares our coreset spectral clustering algorithm against coreset kernel $k$-means \cite{jiang2022coresets} and the sklearn \cite{scikit-learn} implementation of spectral clustering on several %common
real-world graph datasets.
This shows that for a small number of clusters, the coreset methods are much faster than spectral clustering and our coreset spectral clustering algorithm finds significantly better solutions than coreset kernel $k$-means.
The third experiment compares our coreset spectral clustering algorithm against coreset kernel $k$-means on a synthetic graph dataset where we vary the number of clusters to be linear in the number of nodes. Using the spectral clustering method proposed by \citet{macgregorFastSpectralClustering2023} to cluster the coreset graphs, this experiment shows that our method can scale to hundreds of clusters while coreset kernel $k$-means is rendered ineffective by local optima after only tens of clusters.

\section{Related work}

The techniques for speeding   up kernel $k$-means and spectral clustering can be broadly categorised into the methods that sparsify the relations between input data and the methods based on coresets.

For kernel $k$-means \cite{Dhillon04}, low rank approximations of the kernel matrix have been proposed~\cite{musco2017recursive,wang2019scalable} as well as low dimensional approximations of $\phi$ \cite{chitta2012efficient}. On the other hand, coresets for Euclidean spaces~\cite{braverman2021coresets,HarPeled,Huang} have recently been extended to kernel $k$-means~\cite{jiang2022coresets}. Coreset methods sample a weighted subset of the input so that the objective of every feasible solution is preserved. 

For spectral clustering~\cite{von2007tutorial}, spectral sparsifiers~\cite{spielman2011spectral} and more recently cluster preserving sparsifiers~\cite{CPS} can sparsify dense graphs while retaining cluster structure. While effective in theory, spectral sparsifiers require complicated Laplacian solvers to run efficiently, making them difficult to implement in practice. Cluster preserving sparsifiers are practical to implement but their performance is sensitive to the choice of hyperparameters. \citet{peng2015partitioning} proposed a nearly-linear time algorithm for clustering an arbitrary number of clusters which is similar in spirit to a coreset approach. They sample $\Theta(k\log(k))$ nodes leveraging the property that in the spectral embedding nodes in the same cluster are nearby and have approximately the same norm. From this, they efficiently extract a set of $k$ points from which the rest of the data can be labelled. However, this approach is impractical as they make use of Laplacian solvers to approximate heat kernel distances.

As well as sparsification and coresets, speedup can also be achieved via improvements to the optimisation algorithms themselves. The triangle inequality can be used to reduce the number of inner products calculated by the kernel $k$-means algorithm \citep{Dhillon04} and recently a mini-batch algorithm has been proposed \cite{jourdanschwartzman}. \citet{macgregorFastSpectralClustering2023} developed a simple spectral clustering algorithm that forgoes the expensive computation of $k$ eigenvectors in place of $O(\log(k))$ independent calls to the power method. We refer to this method as fast spectral clustering and will use it in our experiments to cluster coreset graphs.

% \begin{itemize}
%     \item Kernel $k$-means, Spectral Clustering and Normalized Cuts \cite{Dhillon04}
%     \begin{itemize}
%         \item Introduced the link between the trace optimisation problems. Further expanded upon in this technical report \url{https://www.cs.utexas.edu/users/inderjit/public_papers/kdd_spectral_kernelkmeans.pdf} and this paper \cite{Dhillon07}. This report and paper discuss the problems with indefinite kernels and how to shift their spectrum to force them to be positive definite (and warning about the consequential performance degradation).
%     \end{itemize}
%     \item Coresets for Kernel Clustering \cite{jiang2022coresets}.
%     \item \cite{1238361} gives the inverse transform to go from eigenspace to partition space. By \cite{Dhillon04}, we first need to premultiply $\tilde{Z}$ by $D^{-1/2}$ to get $Z$.
%     \item Sampling trees \cite{samplingtrees}
%     \item analysis of spectral clustering that is the closest we've come to coreset spectral clustering before, but only works in theory since it relies on Laplacian solvers\cite{peng2015partitioning, macgregorTighterAnalysisSpectral2022}
% \end{itemize}

\section{Preliminaries} \label{sec:preliminaries}
Let $X$ be a set of $n$ objects, and  $K: X\times X \to \R_{\ge 0}$ be a function measuring the pairwise similarity of data points in $X$.
Let $\phi: X\to \Hi$ be the function implicitly defined by $K$ that maps data points in $X$ to the unique Hilbert space such that $\innerprod{\phi(x)}{\phi(y)} = K(x,y)$ for all $x,y\in X$.
The function $K$ is usually represented as a matrix: if $\Phi=[\phi(x_1),\dots, \phi(x_n)]$, then $K=\Phi^T\Phi \in \R^{n\times n}$ with $K_{ij}=\langle \phi(x_i),\phi(x_j)\rangle$. Let $\Delta(\phi(x),\phi(y))\triangleq \norm{\phi(x)-\phi(y)}^2$ denote the squared distance in feature space for all $x,y\in X$, and  $\Delta(\phi(x),C)=\min_{c\in C}\Delta(\phi(x),\phi(c))$ denote the smallest squared distance from $x\in X$ to a set $C\subseteq X$. We refer to the diagonal elements of $K$ as self similarities and say $x$ is a neighbour of (or incident to) $y$, written $x\sim y$, iff $\innerprod{\phi(x)}{\phi(y)}\neq 0$.
If $x\in X$ and $S\subset X$, we say $x$ is incident to $S$ iff $x$ is incident to at least one element of $S$. 

Given a graph $G=(V,E)$, the conductance of a set $S$ of vertices is defined as $\Phi_G(S) \triangleq \abs{E(S,V\setminus S)}/\vol(S)$ where $\abs{E(S,V\setminus S)}$ is the number of edges crossing the cut between $S$ and $V\setminus S$ and $\vol(S)$ is the total weight of edges incident to $S$. We define a $k$-partition of $X$ to be a collection of sets $\Pi=\{\pi_j\}_{j=1}^k$ such that each element of $X$ appears in exactly one member of $\Pi$. 

We make extensive use of the following concepts in our analysis.

\begin{definition}[centroids]
    Given a k-partition $\Pi=\{\pi_j\}_{j=1}^k$ of a set $X$, a map $\phi:X\to \Hi$ for some Hilbert space $\Hi$, and a weight function $X\to \R_+$, define the set of centroids of $\Pi$ as $c_w^\phi(\Pi) \triangleq \{c_w^\phi(\pi_j)\}_{j=1}^k$ where $c_w^\phi(\pi_j) = \left(\sum_{x\in \pi_j}w(x)\phi(x)\right) / \left(\sum_{x\in \pi_j}w(x)\right)$. 
\end{definition}
\iffalse
\Gcomment{I suggest abusing notation and using $\Delta(x,\cdot)$ instead of $\Delta(\phi(x), \cdot)$}
\Gcomment{new def below}
\begin{definition}[Implied partition]
    Given a set of $A=\{a_i\}_{i=1}^k\subset \mathcal{H}$, and some $X' \subseteq X$, A implies a $k$-partition $\Pi = \{\pi_i\}_{i=1}^k$ of $X'$ by setting $\pi_i = \{x\in X' \mid \Delta(x, A) = \Delta (x, a_i)\}$ where ties are broken arbitrarily. We call $\Pi$ the partition implied by $A$ on $X'$.
\end{definition}
\fi
\begin{definition}[kernel $k$-means objective]
    Given a weighted dataset $X$ with weights $w: X \to \R_{+}$ and feature map $\phi: X\to \Hi$ satisfying \eqref{eqn:featuremap} for some kernel function $K$, the weighted kernel $k$-means objective with respect to an arbitrary set of points $C\subseteq \Hi$ is 
    \begin{align}
        \cost_w^\phi(X,C) = \sum_{x\in X}w(x)\Delta(\phi(x),C).\label{eqn:kkmeansc}
    \end{align}
     The kernel $k$-means objective with respect to an arbitrary $k$-partiton $\Pi=\{\pi_j\}_{j=1}^k$ of $X$ is%\Gcomment{$j\in [k]$ below? other appearances?}
     \begin{align*}
          \cost_w^\phi(X,c_w^\phi(\Pi)) = \sum_{j=1}^k \sum_{x\in \pi_j} w(x)\Delta(\phi(x),c_w^\phi(\pi_j)),
     \end{align*}
\end{definition}

\begin{definition}[Normalised cut objective]
Given a graph $G=(V,E)$, the normalised cut problem is to minimise the average conductance over all k-partitions of the vertices: $\min\limits_{\Pi=\{\pi_1,\dots, \pi_k\}} \NCcost(G,\Pi)$, where
$   \NCcost(G,\Pi) = \frac 1 k \sum_{j=1}^k \Phi_G(\pi_j)$.
% \peter{Consider defining $k$-partition earlier.}
\end{definition}

\begin{definition}[$\epsilon$-coresets]
    \label{def:coreset}
    For $0<\epsilon<1$, an $\epsilon$-corset for kernel $k$-means on a weighted dataset $X$ with weights $w: X\to \R_+$ is a reweighted subset $S\subseteq X$ such that for the Hilbert space $\Hi$ and map $\phi:X\to \Hi$ satisfying \eqref{eqn:featuremap}, we have
    \begin{align*}
         \cost_{w'}^\phi(S,C) \in (1\pm \epsilon)\cdot\cost_w^\phi(X,C), \quad  \forall C\subset \Hi~\mbox{with}~ \abs{C}=k, \quad
    \end{align*}
    where $w': S\to \R_+$ gives the weight for each element in the coreset.
\end{definition}

% \begin{definition}[$(\alpha,\beta)$-approximation for weighted kernel $k$-means]
%     Let $\Opt_k(X,\phi)$ denote the optimal objective of \eqref{eqn:kkmeansc}. We call a subset $S\subset \Hi$ an $(\alpha,\beta)-$approximate solution for kernel $k$-means if $\abs{S}=\alpha k$ and $\cost_w^\phi(X,S) \le \beta \Opt_k(X,\phi)$.
% \end{definition}

% \Hcomment{to move this lemma to the place where you actually use it. To me it looks a technical lemma, and Preliminaries section isn't the best place to put it.}
% \begin{lemma}[Adapted from \cite{kanungo2002local}]
%     \label{lemma:movetocentroid}
%     Let $S$ be a finite set of points in a Hilbert space $\Hi$ and let $w:\Hi\to \R_+$ be the weight of each point. Let $c(S)=\left(\sum_{x\in S}w(x)x\right) / \left(\sum_{x\in S}w(x)\right)$ be the centroid of $S$. Then for any $z\in \Hi$, 
%     \begin{align*}
%         \sum_{x\in S} w(x)\norm{x-z}^2 = \sum_{x\in S} \left[w(x)\norm{x-c(S)}^2\right] + \abs{S}\norm{c(S) - z}^2 \sum_{x\in S}w(x)
%     \end{align*}
% \end{lemma}
% The proof of Lemma \ref{lemma:movetocentroid} is deferred to Appendix \ref{appendix:supproofs}.

\section{Fast coreset construction}
\label{sec:fastcoreset}

Given a sparse kernel matrix $K$ with $O(m)$
nonzero entries and weight function $w$, we give an algorithm to construct an $\epsilon$-coreset in $\Tilde{O}(m)$ time.
This improves on the algorithm given by \citet{jiang2022coresets} which has running time $\Tilde{O}(nk)$. The running time of \citet{jiang2022coresets} is dominated by the running time of $D^2$-sampling, Algorithm \ref{alg:Dz_samplingold}, which is just the $k$-means++ initialisation algorithm \cite{arthur2006k} in kernel space. For completeness, we provide the full description of the $\epsilon$-coreset algorithm of \citet{jiang2022coresets} in Algorithm~\ref{alg:construct_coreset} in Appendix~\ref{appendix:coreset}.

In Algorithm~\ref{alg:fastdz}, we use the fact the kernel matrix is sparse to design an efficient data structure, built around a sampling tree \cite{samplingtrees}, to perform $D^2$-sampling in nearly-linear time in $n$ and independent of $k$. To avoid unnecessary updates to the sampling tree, our algorithm has to sample one additional data point prior to performing $D^2$-sampling. Specifically, before uniformly sampling the first point, we will select the point in $C$ with the smallest self affinity. Let $x^*\triangleq \argmin_{x\in X}\innerprod{\phi(x)}{\phi(x)}$ and $c^*\triangleq \innerprod{\phi(x^*)}{\phi(x^*)}$. We show that adding this extra point does not affect the approximation guarantee, and consequently we get a faster $\epsilon$-coreset algorithm for sparse kernels using the same analysis as \citet{jiang2022coresets}.

%The main challenge is speeding up algorithm \ref{alg:dzsample}, an adapted version of kmeans++ in kernel space where we first add an extra datapoint. 

% \begin{algorithm}[th!]
%     \caption{\textsc{$D^z$-Sampling} \cite{jiang2022coresets}}
%     \label{alg:dzsample}
% \begin{algorithmic}[1]
%     \State{ draw $x\in X$ uniformly at random, and initialize $C\gets \{\phi(x)\}$}
%     \For{$i=1,\dots,k-1$ }
%         \State{draw one sample $x\in X$ using probabilities $w_X(x) \cdot \frac{\Delta(\phi(x),C)}{\cost(X,C)}$}
%     \EndFor
% \end{algorithmic}
% \end{algorithm}
\begin{algorithm}[H]
    \caption{$D^2$-Sampling($X$) \cite{jiang2022coresets}}
    \label{alg:Dz_samplingold}
\begin{algorithmic}[1]
    \State \textbf{Input:} $X$
    \State Draw $x \in X$ uniformly at random, and initialise $C \gets \{\varphi(x)\}$
    \For{$i = 1, \ldots, k-1$}
        \State Draw one sample $x \in X$, using probabilities $w_X(x) \cdot \frac{\Delta(\varphi(x), C)}{\cost_{w_X}^\phi(X,C)}$
        \State Let $C \gets C \cup \{\varphi(x)\}$
    \EndFor
    \State \textbf{return} $C$
\end{algorithmic}
\end{algorithm}
%Before uniformly sampling the first point, we will select the point in $C$ with the smallest self affinity. Let $x^*\triangleq \argmin_{x\in X}\innerprod{\phi(x)}{\phi(x)}$ and $c^*\triangleq \innerprod{\phi(x^*)}{\phi(x^*)}$.

\subsection{Our data structure}
The key observation is that as points are added to $C$, the distances from the data points to $C$ can't change for too many points due to the sparsity of the kernel and our choice of $x^*$. We formalise this intuition in what follows.
Let $C\subset X$ be a set of points such that $x^*$ is in $C$,  and consider the squared distance from an arbitrary point $x$ to $C$ in feature space. We see that

\begin{align}
    \Delta(x,C) &= \min_{c\in C}\Big(\innerprod{\phi(x)}{\phi(x)} + \innerprod{\phi(c)}{\phi(c)} - 2\innerprod{\phi(x)}{\phi(c)}\Big) \nonumber\\
    &=\min\limits_{c\in C} \begin{cases}
        \innerprod{\phi(x)}{\phi(x)} + \innerprod{\phi(c)}{\phi(c)} & x\not \sim c\\
        \innerprod{\phi(x)}{\phi(x)} + \innerprod{\phi(c)}{\phi(c)} - 2\innerprod{\phi(x)}{\phi(c)} & x \sim c
    \end{cases} \nonumber\\
    &= \begin{cases}
        \Delta(x,x^*)& x\not \sim c \text{ for all }c \in C\\
        \min\limits_{c\in C}\Delta(x,c) & \text{otherwise,}
    \end{cases} \label{eqn:distexpansion}
\end{align}
where the last transition makes use of the fact that $x^*\in C$ and \[\Delta(x,x^*) = \innerprod{\phi(x)}{\phi(x)}+\innerprod{\phi(x^*)}{\phi(x^*)}\le \innerprod{\phi(x)}{\phi(x)}+\innerprod{\phi(c)}{\phi(c)} = \Delta(x,c)\] for all $x\in X$ and $c\in C$ such that $x\not \sim c$. From this we can see how adding a point to $C$ changes the distance between some $x\in X$ and $C$:
\begin{align*}
    \Delta(x,C\cup \{y\}) = \begin{cases}
        \Delta(x, C) & x\not \sim y\\
        \min(\Delta(x,C), \Delta(x,y)) & x\sim y.
    \end{cases}
\end{align*}
Assuming we know the values of $\Delta(x,C)$ for all $x\in X$, 
each time we add a point $y$ to $C$, for each neighbour $x$ of $y$, it suffices to check whether $\Delta(x,y)$ is less than $\Delta(x,C)$. Let $A\subset X$ be the set returned by Algorithm \ref{alg:fastdz}. Then the total number of checks is $\sum_{x\in A} \deg(x)$ where $\deg(x)= \abs{\{y: x\sim y\}}$ is the number of nonzero entries in the row of $K$ corresponding to $x$. In the worst case, we have to check the entries in the rows of the $k+1$ data points with the highest degree. Letting $d_{avg}= \frac{2\abs{E}}{n}$ be the average degree, it holds that
\begin{align*}
    d_{avg} = \frac{2\abs{E}}{n} = \frac 1 {n} \sum_{x\in X} \deg(x) \ge \frac 1 {n}\sum_{x\in A} \deg(x),
\end{align*}
and therefore $\sum_{x\in A} \deg(x) \le n\cdot \deg_{avg}$ and the maximum number of checks we need to perform is $O(\min(d_{avg},k)\cdot n)$.

\textbf{Sampling data points according to contribution.} 
We define the contribution of a data point $x$ with respect to $C$ as $f(x,C) \triangleq w_X(x)\cdot\Delta(\phi(x),C)$, and  the contribution of a set of data points $S\subseteq X$ with respect to  $C$ as $f(S,C)\triangleq \sum_{x\in S} f(x,C)$.  Normalised by $\cost_{w_{X}}^\phi(X,C)$, the contribution of each data point follows the $D^2$-sampling distribution and $\frac{f(S,C)}{f(X,C)}$ is the probability of sampling a point from a set $S\subseteq X$.

Suppose we have nested sets $S_1\subset S_2\subset S_3 \subset \dots \subset S_m \subseteq X$. For any $x\in S_1$, we have that 
\begin{align}
    \Pr\left[x\ \text{is sampled} \right] = \frac{f(x,C)}{f(X,C)} = \frac{f(S_m,C)}{f(X,C)}\frac{f(S_{m-1},C)}{f(S_m,C)}\dots \frac{f(x,C)}{f(S_1,C)}. \label{eqn:samplecont}
\end{align}

Due to this decomposition, we can use a sampling tree $T$ to efficiently sample proportional to contributions and update contributions as points are added to $C$.

The root node of $T$ corresponds to the entire set $X$ and stores the total contribution $f(X,C)$. Sibling nodes represent a partition of the set their parent corresponds to and store their respective contributions. Leaf nodes store the data points they represent, their contributions, and weights.

To sample a data point proportional to contribution, we start at the root node of $T$ and recursively sample a child node with probability equal to the child's contribution divided by the parent's contribution until we reach a leaf, corresponding to the sampled data point. From \eqref{eqn:samplecont}, this is equivalent to sampling points porportional to contribution, and therefore the distribution required by $D^2$-sampling.

\textbf{Updating contributions efficiently.} We can update the contributions stored in $T$ efficiently each time a point $y$ is added to $C$. Suppose we find a neighbouring data point $x$ of $y$ such that $\Delta(x,y)$ is less than the stored value of $\Delta(x,C)$. Then we set the stored value of $\Delta(x,C)$ to be $\Delta(x,y)$ and subtract the difference in contribution from every internal node along the path from the leaf node to the root. If $T$ is balanced, then the cost to repair $T$ after each check is $O(\log(n))$. The full algorithm is given in Algorithm \ref{alg:fastdz}. Since $x^*$ can be found in time $O(n)$, $T$ is constructed in time $O(n)$, and the cost of sampling $k$ points and repairing $T$ is $O(\min(d_{avg},k)\cdot n \log(n))$, the overall running time of Algorithm \ref{alg:fastdz} is $\tilde{O}(\min(d_{avg},k)\cdot n)$.

\paragraph{Correctness.} We start with the following useful definition:

\begin{definition}[$(\alpha,\beta)$-approximation for weighted kernel $k$-means]
    Let $\Opt_k(X,\phi)$ denote the optimal objective of \eqref{eqn:kkmeansc}. We call a subset $S\subset \Hi$ an $(\alpha,\beta)$-approximate solution for kernel $k$-means if $\abs{S}=\alpha k$ and $\cost_w^\phi(X,S) \le \beta \Opt_k(X,\phi)$.
\end{definition}
$D^2$-sampling returns an $(O(1),O(\log(k)))$-approximate solution for $k$-means with high probability \cite{jiang2022coresets}; adding the extra point $x^*$ has a negligible effect. The proof follows that of Lemma 3.1 in \citet{arthur2006k} with the only difference being the equality of Lemma 3.1 becomes an inequality.
%\peter{Is that Lemma 3.1 in \citet{arthur2006k}? It's confusing because the reader will go looking for Lemma 3.1 in this paper.}
We summarise the result in the following lemma: %\Gcomment{Move the definition for $(\alpha, \beta)$-approx here?}

\begin{lemma}
    Given a kernel matrix $K$ corresponding to dataset $X$, Algorithm \ref{alg:fastdz} returns an $(O(1),O(\log k))$-approximation for kernel $k$-means with high probability and running time $\tilde{O}(\min(d_{avg},k)\cdot n)$.
\end{lemma}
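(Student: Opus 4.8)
The plan is to prove the two assertions—running time and approximation quality—separately, drawing on the sampling-tree analysis already developed above for the former and on the Arthur--Vassilvitskii potential argument for the latter.

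For the running time I would simply assemble the costs established in the discussion of the data structure. Computing $x^*=\argmin_{x\in X}\innerprod{\phi(x)}{\phi(x)}$ is a single pass over the diagonal of $K$, costing $O(n)$; building the balanced sampling tree $T$ over the $n$ leaves costs $O(n)$. During sampling, the total number of distance checks is $O(\min(d_{avg},k)\cdot n)$, since for each of the $O(k)$ inserted centres we examine only its neighbours and, as shown above, $\sum_{x\in A}\deg(x)\le n\cdot d_{avg}$. Each check triggers at most one contribution update that propagates from a leaf to the root of $T$ at cost $O(\log n)$. Summing gives $O(\min(d_{avg},k)\cdot n\log n)=\tilde{O}(\min(d_{avg},k)\cdot n)$.

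For correctness I would first observe that Algorithm~\ref{alg:fastdz} is identical to $D^2$-sampling (Algorithm~\ref{alg:Dz_samplingold}) except that $x^*$ is inserted into $C$ before the initial uniform draw. The returned set thus has size $k+1=O(k)$, so the size condition $\alpha=O(1)$ is met, and it remains only to bound $\cost_w^\phi(X,C)\le O(\log k)\cdot\Opt_k(X,\phi)$ with high probability. Since standard $D^2$-sampling already achieves this bound \citep{jiang2022coresets}, the whole task reduces to checking that pre-seeding $C$ with $x^*$ cannot degrade the guarantee. The one place the seed enters the Arthur--Vassilvitskii argument is the base case (their Lemma~3.1), which relates the expected cost of the cluster $A$ containing the uniformly chosen first centre $a$ to its optimal single-centre cost via an exact identity. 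With $x^*$ already in $C$, each point of $A$ contributes $\min(\Delta(\phi(x),\phi(a)),\Delta(\phi(x),\phi(x^*)))\le\Delta(\phi(x),\phi(a))$, so the identity becomes an inequality in the favourable direction—precisely the modification noted in the text.

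The hard part will be verifying that every subsequent step of the induction—the potential-function bound that accumulates the $O(\log k)$ factor as the remaining $k-1$ centres are $D^2$-sampled—uses this quantity only as an upper bound and so is unaffected. This follows from the monotonicity $\Delta(\phi(x),C\cup\{y\})\le\Delta(\phi(x),C)$ established in the distance-expansion discussion: adding centres only decreases pointwise distances, so every upper bound in the original proof remains valid when distances are measured against the enlarged set $C\supseteq\{x^*\}$. Replacing the single equality by the corresponding inequality throughout then yields the same $O(\log k)$ bound in expectation, and the high-probability statement follows exactly as in \citet{jiang2022coresets} via a Markov inequality and standard boosting. Combining this with the running-time bound completes the proof.
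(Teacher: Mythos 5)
Your proposal follows essentially the same route as the paper: the running time is assembled from the sampling-tree discussion of Section~\ref{sec:fastcoreset} (cost of finding $x^*$, building $T$, and $O(\min(d_{avg},k)\cdot n)$ checks at $O(\log n)$ repair cost each), and correctness is reduced to the Arthur--Vassilvitskii analysis with the observation that pre-seeding $C$ with $x^*$ turns the equality of their Lemma~3.1 into an inequality in the favourable direction, which is exactly the paper's stated justification, with your monotonicity remark filling in why the rest of the induction is unaffected. One trivial correction: Algorithm~\ref{alg:fastdz} returns $k+2$ centres (the uniform draw, $x^*$, and $k$ sampled points), not $k+1$, though this still gives $\alpha=O(1)$.
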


By the analysis of \citet{jiang2022coresets}, this immediately implies the following kernel coreset result:

\begin{lemma}
    \label{lemma:fastcoreset}
    Given a kernel matrix $K$ corresponding to dataset $X$, Algorithm \ref{alg:construct_coreset}, using Algorithm \ref{alg:fastdz} for $D^2$-sampling, returns an $\epsilon$-coreset with high probability with size $\Tilde{O}(k^2 \epsilon^{-4})$ and running time $\tilde{O}(\min(d_{avg},k)\cdot n)$.
\end{lemma}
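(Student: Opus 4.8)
The plan is to treat Algorithm~\ref{alg:construct_coreset} as a black box whose only randomised ingredient is the bicriteria approximation supplied by $D^2$-sampling, and then argue that swapping the slow Algorithm~\ref{alg:Dz_samplingold} for the faster Algorithm~\ref{alg:fastdz} changes neither the correctness nor the size guarantee while improving the running time. Concretely, the coreset construction of \citet{jiang2022coresets} (following \citet{braverman2021coresets}) proceeds in two stages: it first computes an $(O(1),O(\log k))$-approximate solution $C$ for weighted kernel $k$-means, and then uses $C$ to define importance-sampling (sensitivity) probabilities from which a reweighted subset $S\subseteq X$ of size $\tilde{O}(k^2\epsilon^{-4})$ is drawn. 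The analysis showing that $S$ is an $\epsilon$-coreset with high probability, together with the size bound, depends only on the \emph{quality} of the input solution $C$ and not on the procedure that produced it.

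First I would invoke the preceding lemma: Algorithm~\ref{alg:fastdz} returns an $(O(1),O(\log k))$-approximation with high probability, which is exactly the guarantee that Algorithm~\ref{alg:Dz_samplingold} provides and that the second stage of \citet{jiang2022coresets} consumes. In particular, the extra centre $x^*$ only enlarges the solution from $O(k)$ to $O(k)+1=O(k)$ points, so it is absorbed into the factor $\alpha=O(1)$ and leaves $\beta=O(\log k)$ unchanged. Hence Algorithm~\ref{alg:fastdz} is a valid drop-in replacement for $D^2$-sampling, and the coreset correctness and the size bound $\tilde{O}(k^2\epsilon^{-4})$ carry over verbatim.

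For the running time I would account for the two stages separately. Stage one now costs $\tilde{O}(\min(d_{avg},k)\cdot n)$ by the preceding lemma. For stage two, computing the sensitivity of each point requires only the squared distances $\Delta(\phi(x),C)$ and the induced cluster assignments; but these are precisely the values maintained in the leaves of the sampling tree $T$ at the termination of Algorithm~\ref{alg:fastdz}, so they are available at no extra asymptotic cost. The per-cluster normalising sums and the per-point sensitivities can therefore be assembled in a single $\tilde{O}(n)$ pass, and the subsequent draw of $\tilde{O}(k^2\epsilon^{-4})$ samples lies within budget whenever the coreset is smaller than the input (since $\min(d_{avg},k)\cdot n \ge n \ge k^2\epsilon^{-4}$ in that regime). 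Thus the total is dominated by stage one, giving $\tilde{O}(\min(d_{avg},k)\cdot n)$.

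The step I expect to require the most care is this final bookkeeping. The old construction hid its entire second stage inside the $\tilde{O}(nk)$ bound, so nothing in \citet{jiang2022coresets} forced the sensitivity computation and the point-to-centre assignment to run in $o(nk)$ time. The crux of the argument is that these quantities need not be recomputed from scratch: reusing the distances already stored in $T$ keeps stage two at $\tilde{O}(n)$ and hence within the smaller $\tilde{O}(\min(d_{avg},k)\cdot n)$ target. Everything else then follows immediately by citing the guarantees of \citet{jiang2022coresets}.
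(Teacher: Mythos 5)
Your proposal is correct and takes essentially the same route as the paper: the paper's entire proof is to invoke the preceding lemma (Algorithm~\ref{alg:fastdz} is an $(O(1),O(\log k))$-bicriteria approximation with high probability, computed in $\tilde{O}(\min(d_{avg},k)\cdot n)$ time) and then observe that the analysis of \citet{jiang2022coresets} applies verbatim, exactly as you argue. Your extra bookkeeping for the importance-sampling stage --- reusing the distances $\Delta(\phi(x),C)$ already stored in the leaves of the sampling tree so that sensitivities are assembled in a single $\tilde{O}(n)$ pass --- is a sound and slightly more careful filling-in of a step the paper leaves implicit.
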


\begin{algorithm}[th!]
    \caption{\FastDzSampling}
    \label{alg:fastdz}
\begin{algorithmic}[1]
\State{\textbf{Input:} Dataset $X$ with $\abs{X}=n$, positive definite matrix $K\in \R^{n\times n}_{\ge 0}$, $W\in \R^n_+, k\in \N$ }
\State{$T \gets $\ConstructT$(X,K,W)$ } \Comment Algorithm \ref{alg:construct}
\State $x^* \gets \argmin_{x\in X} \innerprod{\phi(x)}{\phi(x)}$
% \State $c^* \gets  \innerprod{\phi(x)}{\phi(x)}$ \Gcomment{Not used}
\State draw $x\in X$ uniformly at random
\State $C \gets \{\phi(x),\phi(x^*)\}$
\State \repair{$x^*$,$T$} \Comment Algorithm \ref{alg:repair}
\For{$i=1$ to $k$}
\State $x\gets$\sample$(T)$ \Comment Algorithm \ref{alg:samplep}
\State $C \gets C\cup \{\phi(x)\}$
\State \repair{$x$,$T$}
\EndFor
\State \Return $C$
\end{algorithmic}
\end{algorithm}

% \subsection{Time complexity} 
% \Gcomment{I'm using deg here to denote unweighted degree, hope this is clear.}

% We use self balancing trees for $T_{N_C}$ and $T_{\overline{N_C}}$. Whenever we sample a an element $c$ (in either tree) we remove $c$ completely and move all neighbors of its neighbors that are in $T_{\overline{N_C}}$ to $T_{N_C}$. Moving a single element from one tree to another requires $O(\log n)$ time. Therefore, to bound the running time, it is sufficient to bound the number of nodes moved from one tree to another over $k$ iterations.

% We parameterize the running time by $d_{avg} = |E|/n$, the average degree of the graph. In the worst case scenario, we will move the $k$ nodes with the highest degree in the graph. Let us denote this set as $A$. And the running time will be $O(\sum_{v\in A} deg(v) \log n)$. However, it holds that:
% \begin{align*}
%     d_{avg} = |E|/n = \frac{1}{2n}\sum_{v \in V} deg(v) \geq \frac{1}{2n}\sum_{v \in A} deg(v) 
% \end{align*}
% Finally we get that $\sum_{v\in A} deg(v) \leq 2n\cdot d_{avg}$ and the final running time is $O(d_{avg} \cdot  n\log n)$. \Gcomment{This should probably be:$O(\min \{d_{avg},k\} \cdot  n\log n)$}

\section{Coreset Spectral Clustering}
\label{sec:csc}
In this section we present our Coreset Spectral Clustering (CSC) algorithm. An intuitive illustration is given in Figure \ref{fig:CSC}. Given an input graph, we first extract the corresponding weighted kernel $k$-means problem via the equivalence to the normalised cut problem, and then construct an $\epsilon$-coreset. Following this, again via the equivalence, we solve the corresponding normalised cut problem on the coreset graph to get the coreset partition. Finally, we   label the rest of the data by considering kernel distances to the implied centers induced by the coreset graph partition. Our main contribution is  summarised in Theorem~\ref{theorem:csc}, which states that  coreset graphs preserve normalised cuts from the original graph. 

\begin{figure}[th!]
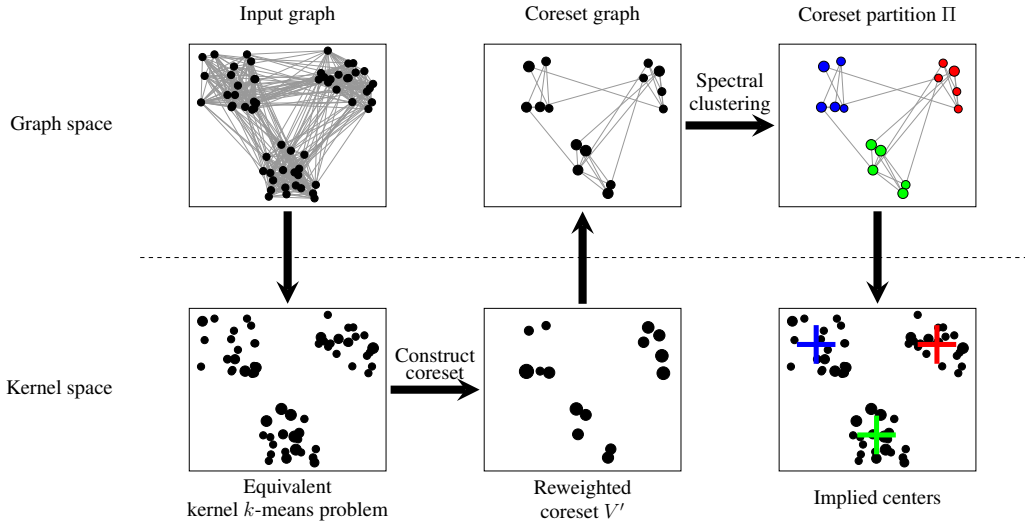

    \centering
    \resizebox{\textwidth}{!}{%
        % Parameterization of box size
\newcommand{\boxwidth}{4.5cm}
\newcommand{\boxheight}{4cm}

% Parameterization of arrow lengths
\newcommand{\horizontaldist}{7.25cm}  % Horizontal distance between nodes
\newcommand{\verticaldist}{6.5cm}    % Vertical distance between rows

% Parameterization of arrow style
\tikzset{arrowstyle/.style={->, thick, color=black, line width=2mm, >=stealth, shorten >= 3pt, shorten <= 3pt}}  % Default arrow style

\begin{tikzpicture}
\Large  % Apply \Large to all text in this environment

% Subcaptions above A, D, E
\node at (0, \verticaldist + \boxheight/2 + 0.7cm) {\shortstack{Input graph}};  % Adjusted distance for symmetry
\node at (\horizontaldist, \verticaldist + \boxheight/2 + 0.7cm) {Coreset graph};
\node at (2*\horizontaldist, \verticaldist + \boxheight/2 + 0.7cm) {Coreset partition $\Pi$};

% Subcaptions below B, C, F
\node at (0, -\boxheight/2 - 0.7cm) {\shortstack{Equivalent \\kernel $k$-means problem}};  % Consistent distance below the nodes
\node at (\horizontaldist, -\boxheight/2 - 0.7cm) {\shortstack{Reweighted \\coreset $V'$}};
\node at (2*\horizontaldist, -\boxheight/2 - 0.7cm) {Implied centers};

% Top row: Nodes A, D, E with parameterized box size
\node[draw, minimum width=\boxwidth, minimum height=\boxheight] (A) at (0, \verticaldist) {\resizebox{\boxwidth}{!}{\input{CSCFiles/A}}};  % Resized content for A
\node[draw, minimum width=\boxwidth, minimum height=\boxheight] (D) at (\horizontaldist, \verticaldist) {\resizebox{\boxwidth}{!}{\input{CSCFiles/D}}};
\node[draw, minimum width=\boxwidth, minimum height=\boxheight] (E) at (2*\horizontaldist, \verticaldist) {\resizebox{\boxwidth}{!}{\input{CSCFiles/E}}};

% Bottom row: Nodes B, C, F with parameterized box size
\node[draw, minimum width=\boxwidth, minimum height=\boxheight] (B) at (0, 0) {\resizebox{\boxwidth}{!}{\input{CSCFiles/B}}};;
\node[draw, minimum width=\boxwidth, minimum height=\boxheight] (C) at (\horizontaldist, 0) {\resizebox{\boxwidth}{!}{\input{CSCFiles/C}}};
\node[draw, minimum width=\boxwidth, minimum height=\boxheight] (F) at (2*\horizontaldist, 0) {\resizebox{\boxwidth}{!}{\input{CSCFiles/F}}};

% Sequential arrows with parameterized style A -> B -> C -> D -> E -> F
\draw[arrowstyle] (A) -- (B) node[midway, right, yshift=0.4cm] {};  % Shifted to avoid the line
\draw[arrowstyle] (B) -- (C) node[midway, above] {\shortstack{Construct \\ coreset}};
\draw[arrowstyle] (C) -- (D) node[midway, right, yshift=0.4cm] {};  % Shifted to avoid the line
\draw[arrowstyle] (D) -- (E) node[midway, above] {\shortstack{Spectral\\ clustering}};
\draw[arrowstyle] (E) -- (F) node[midway, right, yshift=0.3cm] {};

% Dotted horizontal line separating top and bottom rows
\draw[dashed] (-\horizontaldist/2, \verticaldist/2) -- (2*\horizontaldist + \horizontaldist/2, \verticaldist/2);

% Right-aligned labels
% \node[anchor=west] at (2*\horizontaldist + 3cm, \verticaldist) {Graph};
\node[anchor=east] at (-\horizontaldist/2 - 0.5cm, 0) {\Large Kernel space};

\node[anchor=east] at (-\horizontaldist/2 - 0.5cm, \verticaldist) {\Large Graph space};

\end{tikzpicture}
    }
    \caption{Sketch of the Coreset Spectral Clustering Algorithm.}
    \label{fig:CSC}
\end{figure}

\paragraph{Equivalence between normalised cut and kernel $k$-means.}
First recall that the normalised cut problem and the kernel $k$-means problem can both be written as the following trace optimisation problems up to a constant \cite{Dhillon04}:
\begin{tcolorbox}
    \begin{minipage}[t]{0.45\textwidth}
        \textbf{Normalised Cut}
        \begin{equation*}
            \begin{aligned}
                & \min & & \tr (D^{-1}A) - \tr (Z^TD^{-\frac 12}AD^{-\frac 12}Z) \\
                & \text{s.t.} & & \mathcal{X}\in \{0,1\}^{n\times k}, \\
                & & & \mathcal{X}1_{k} = 1_n,\\
                & & & Z = D^{\frac 12}\mathcal{X}(\mathcal{X}^TD\mathcal{X})^{-\frac 12}
            \end{aligned}
        \end{equation*}
    \end{minipage}
    \hfill
    \begin{minipage}[t]{0.45\textwidth}
        \textbf{Weighted Kernel $k$-means}
        \begin{equation}
            \begin{aligned}
                & \min & & \tr(WK) - \tr(Y^TW^{\frac 12}KW^{\frac 12}Y) \\
                & \text{s.t.} & & \mathcal{X}\in \{0,1\}^{n\times k}, \\
                & & & \mathcal{X}1_{k} = 1_n,\\
                & & & Y = W^{\frac 12}\mathcal{X}(\mathcal{X}^TW\mathcal{X})^{-\frac 12}
            \end{aligned}
            \label{eqn:traceequiv}
        \end{equation}
    \end{minipage}
\end{tcolorbox}

In the above optimisation problems (\ref{eqn:traceequiv}), $A$ and $D$ are the adjacency and degree matrices corresponding to some graph $G$, $K$ is a kernel matrix such that $K_{ij}=\innerprod{\phi(x_i)}{\phi(x_j)}$, and $W$ is a diagonal matrix with $W_{ii}=w(x_i)$. Crucially, these problems are equivalent. Indeed substituting $K=D^{-1}AD^{-1}$ and $W=D$ shows that any normalised cut problem can be written as a kernel $k$-means problem; substituting $D=W$ and $A=WKW$ shows the reverse. The only wrinkle with translating one to the other is that $K$ must be positive semi-definite. If $A$ is not positive semi-definite, then we can add a multiple of $D^{-1}$ to $D^{-1}AD^{-1}$ to make it so while preserving the optimal partitioning \cite{Dhillon07}.

%\paragraph{Relation to  Figure~\ref{fig:CSC}} 
\paragraph{Coreset graphs.} 
The first and third transitions in Figure~\ref{fig:CSC} are due to equivalence~(\ref{eqn:traceequiv}). The second transition is simply via constructing a coreset, and the fourth is by executing spectral clustering. The crux of our proof is the fifth transition.
Specifically, it is not clear how to translate the partition of the coreset (in graph space) to a solution for the entire input graph.
The main difficulty with using coresets for the normalised cut problem is that there is no notion of ``cluster centers'' on graphs. To overcome this, we go back  to kernel space (the fifth transition) and label the entire input there. For the rest of the section we   prove that this approach preserves the approximation of spectral clustering on the coreset for the entire graph.

To help with this, we will use the following lemma, whose proof is deferred to Appendix~\ref{appendix:supproofs}. 
\begin{lemma}[Adapted from \cite{kanungo2002local}]
    \label{lemma:movetocentroid}
    Let $S$ be a finite set of points in a Hilbert space $\Hi$, and   $w:\Hi\to \R_+$ be the weight of each point. Let $c(S)=\left(\sum_{x\in S}w(x)x\right) / \left(\sum_{x\in S}w(x)\right)$ be the centroid of $S$. Then, it holds for any $z\in \Hi$ that
    \begin{align*}
        \sum_{x\in S} w(x)\norm{x-z}^2 = \sum_{x\in S} \left[w(x)\norm{x-c(S)}^2\right] + \abs{S}\norm{c(S) - z}^2 \sum_{x\in S}w(x).
    \end{align*}
\end{lemma}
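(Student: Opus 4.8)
The plan is to prove this as an instance of the parallel-axis (bias--variance) decomposition for weighted point sets, via the standard ``add and subtract the centroid'' manipulation, which works verbatim in any Hilbert space since it uses only bilinearity of the inner product. Concretely, for each $x\in S$ I would write $x-z = (x-c(S)) + (c(S)-z)$ and expand the squared norm:
\begin{align*}
\norm{x-z}^2 = \norm{x-c(S)}^2 + 2\inner{x-c(S)}{c(S)-z} + \norm{c(S)-z}^2.
\end{align*}
Multiplying through by $w(x)$ and summing over $x\in S$ then splits the left-hand side into three contributions, which I would handle separately.

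The first contribution is exactly $\sum_{x\in S} w(x)\norm{x-c(S)}^2$, and the third is $\big(\sum_{x\in S} w(x)\big)\norm{c(S)-z}^2$, since the vector $c(S)-z$ does not depend on $x$ and so its squared norm factors out of the sum. The crux is the middle (cross) term: pulling the constant vector $c(S)-z$ out of the sum rewrites it as $2\inner{\sum_{x\in S} w(x)(x-c(S))}{c(S)-z}$, so it suffices to show that the weighted residual $\sum_{x\in S} w(x)(x-c(S))$ is the zero vector. This is immediate from the definition $c(S) = \big(\sum_{x\in S} w(x)x\big)\big/\big(\sum_{x\in S} w(x)\big)$, which rearranges to $\sum_{x\in S} w(x)x = \big(\sum_{x\in S} w(x)\big)c(S)$, i.e.\ $\sum_{x\in S} w(x)(x-c(S)) = \vzero$.

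With the cross term eliminated, collecting the two surviving contributions yields the decomposition. The only step requiring genuine care is the vanishing of the cross term, and this is precisely where the specific choice $c(S)$ is used: it is the first-order optimality of the weighted mean, and it is what makes the statement an exact equality rather than an inequality. Everything else is routine inner-product algebra. In carrying this out I would track the coefficient of $\norm{c(S)-z}^2$ carefully, since the natural derivation produces the total weight $\sum_{x\in S} w(x)$ as that coefficient, and I would reconcile this with the stated prefactor during write-up.
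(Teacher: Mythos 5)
Your proof is correct and takes essentially the same route as the paper's: both decompose $x-z=(x-c(S))+(c(S)-z)$, expand by bilinearity of the inner product, and eliminate the cross term using $\sum_{x\in S}w(x)\bigl(x-c(S)\bigr)=\vzero$, which is immediate from the definition of the weighted centroid. Your caution about the coefficient of $\norm{c(S)-z}^2$ is well founded, and you should not try to ``reconcile'' your answer with the stated prefactor: the correct coefficient is the total weight $\sum_{x\in S}w(x)$, exactly as your derivation produces, and the factor $\abs{S}$ in the lemma statement is spurious. It is an artifact of adapting the unweighted lemma of \cite{kanungo2002local}, where the coefficient $\abs{S}$ \emph{is} the total weight; the paper's own proof inserts the extra $\abs{S}$ without justification in its penultimate line (the preceding line has $\sum_{u\in S}w(u)\norm{c(S)-z}^2$, which equals $\norm{c(S)-z}^2\sum_{u\in S}w(u)$ and nothing more). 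The error is harmless downstream, since Lemmas \ref{lemma:allpreserved} and \ref{lemma:optpreserved} only use the fact that replacing an arbitrary $z$ by the centroid cannot increase the weighted cost, and the discarded term is nonnegative under either coefficient.
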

% Intuitively, the above lemma allows us to draw a connection between the centroids in kernel space implied by the coreset partition, and the centroids of the partition of the input graph (in kernel space),
% \red{where each node is assigned to its closest coreset center.}
% %when each node is assigned the partition of its closest coreset center.
% \peter{Actually, I don't quite understand this sentence - what are the two centroids you're referring to?} \Bcomment{Coreset graph labels -> implied centers in kernel space. The other set of centers is what happens when you label the full data wrt these centers, and then move them to be centroids. That make sense?}
Intuitively, the above lemma allows us to draw a connection between the centroids of the coreset partition (in kernel space) and the centroids of the full partition,  where every node is assigned to its closest coreset center.
We will also make use of the following definitions which allow us to use $k$-partitions instead of set membership matrices in (\ref{eqn:traceequiv}).
%\Gcomment{$N$ or $n$ for size of input?} \Gcomment{Is there a difference between $X$ and $V$?} \Gcomment{Whats the difference between $A$ and $E$?} \Gcomment{Why not $NC_G$?} \Gcomment{Whats the difference from Def 3 ($NC(G,\Pi$)? }
\begin{enumerate}
    \item Let $G=(V,E)$ be a graph on $n$ vertices with adjacency matrix $A$ and degree matrix $D$, and let $\Pi$ be a $k$-partition of $V$. Then define $\NCcost_{A,D}(\Pi)\triangleq \tr(D^{-1}A) - \tr(Z^TD^{-1/2}AD^{-1/2}Z)$ to be the objective of the trace minimisation version of the normalised cut problem in (\ref{eqn:traceequiv}), where $Z=D^{1/2}\mathcal{X}(\mathcal{X}^TD\mathcal{X})^{-1/2}$ and $\mathcal{X}\in \{0,1\}^{n\times k}$ is the unique set membership matrix on $V$ corresponding to $\Pi$.
    \item For any $k$-partition of $X$, $\Pi$, further define $\cost_{K,W}(\Pi) = \tr(WK) - \tr(Y^TW^{1/2}KW^{1/2}Y)$ to be the objective of the trace minimisation version of the weighted kernel $k$-means problem in (\ref{eqn:traceequiv}), where $Y=W^{1/2}\mathcal{X}(\mathcal{X}^TW\mathcal{X})^{-1/2}$ and $\mathcal{X}\in \{0,1\}^{n\times k}$ is the unique set membership matrix on $X$ corresponding to $\Pi$.
\end{enumerate}

Translating the objective of a $k$-partition with respect to the normalised cut problem to the objective of a set of centres with respect to the kernel $k$-means objective is straightforward.
\citet{Dhillon04} show that for any $k$-partition $\Pi=\{\pi_j\}_{j=1}^k$, we have $\cost_{K_G,W_G}(\Pi)=\cost_w^\phi(V, c_{w}^\phi(\Pi))$, where $K_G=D^{-1}AD^{-1}$, $W_G=D_G$, $w_G=\diag{W_G}$ and $\phi:V\to \Hi$ is a map such that $\innerprod{\phi(x)}{\phi(y)}= K_G(x,y)$ for all $x,y\in V$.
For completeness we include a simpler derivation in Lemma \ref{lemma:kkmeansequiv}. This implies that $\NCcost_{A,D}(\Pi) = \cost_{K_G,W_G}(\Pi)=\cost_w^\phi(V, c_{w_G}^\phi(\Pi))$.

The other direction is more difficult; given an arbitrary set of $k$ centers $S=\{s_j\}_{j=1}^k\subset \Hi$, we would like to construct a $k$-partition $\Pi'=\{\pi_j'\}_{j=1}^k$ of $V$ such that
\begin{align*}
    \NCcost_{A,D}(\Pi')=\cost_{K_G,W_G}(\Pi') = \cost_w^\phi(V, c_{w_G}^\phi(\Pi')) = \cost_w^\phi(V, S).
\end{align*}

In general it may not be possible to choose $\Pi'$ such that $\cost_w^\phi(V, c_{w_G}^\phi(\Pi'))=\cost_w^\phi(V, S)$. However, the inequality $ \cost_w^\phi(V, c_{w_G}^\phi(\Pi'))\le \cost_w^\phi(V, S)$ does hold by choosing $\Pi'$ such that $x\in \pi_j'$ iff $\Delta(\phi(x),S)= \Delta(\phi(x),s_j)$, breaking ties arbitrarily. This is a consequence of Lemma \ref{lemma:movetocentroid} which tells us that moving the center of each $\pi'$ to their centroids can only reduce the weighted kernel $k$-means objective. This will be sufficient to prove our main result.

Given a graph $G$ with adjacency matrix $A$ and degree matrix $D$, let $\NCopt_{A,D}(k)$ denote the optimal value of the normalised cut problem (\ref{eqn:traceequiv}) on $G$. We can now state our main result:

% \begin{theorem}
% \label{theorem:csc}
% Suppose we are given a graph $G=(V,E)$ with adjacency matrix $A_G$ and degree matrix $D_G$, $\epsilon\in(0,1)$, $\alpha>1$, and construct the following:
% \begin{enumerate}
%     \item $K_G=D_G^{-1}A_GD_G^{-1}$ and $W_G=D_G$.
%     \item An $\epsilon$-coreset $V'\subset V$ with weights $w_H:V'\to \R_+$ of the weighted kernel $k$-means problem with feature map $\phi: V \to \Hi$ and weights $w_G:V\to \R_+$ such that $w_G(x)=W_G(x)$ and $\innerprod{\phi(x)}{\phi(y)}=K_G(x,y)$ for all $x,y\in V$. 
%     \item $H=(V',E')$ with adjacency matrix $A_H=W_HK_G(V')W_H$ and degree matrix $D_H=W_H$ where $W_H=\Diag{w_H}$ and $K_G(V')$ is the principle submatrix of $K_G$ with respect to $V'$.
%     \peter{Is it guaranteed that the degrees in $D_H$ match the row sums of $A_H$?}
%     \item A k-partition $\Pi=\{\pi_j\}_{j=1}^k$ of $V'$ such that $\NCcost_{A_H,D_H}(\Pi)\le \alpha \NCopt_{A_H,D_H}(k)$.
% \end{enumerate}
% Then it holds that
% \begin{align*}
%     \NCcost_{A_G,D_G}(\Pi') \le \alpha \frac{1+\epsilon}{1-\epsilon}\NCopt_{A_G,D_G}(k),
% \end{align*}
% where $\Pi'=\{\pi_j'\}_{j=1}^k$ is the k-partition of $V$ such that $x\in \pi_j'$ iff $\Delta(\phi(x),c^\phi_{w_H}(\Pi))=\Delta(\phi(x),c_{w_H}^\phi(\pi_j))$, breaking ties arbitrarily. The running time to compute $\Pi'$ is the sum of the running time of the $\epsilon$-coreset algorithm, the $\alpha$-approximation algorithm, and partitioning $V$ according to $\Pi$.  \end{theorem}

\begin{theorem}
\label{theorem:csc}
Given a graph $G=(V,E)$ and an $\alpha$-approximation algorithm for the normalised cut problem with $k$ clusters (\ref{eqn:traceequiv}), $\textsc{SpectralClustering}$, Algorithm \ref{alg:csc} returns a $k$-partition $\Pi'$ of $V$ such that
\begin{align*}
    \NCcost_{A_G,D_G}(\Pi') \le \alpha\cdot \frac{1+\epsilon}{1-\epsilon}\cdot \NCopt_{A_G,D_G}(k).
\end{align*}
The running time of Algorithm \ref{alg:csc} is the sum of the running time of the $\epsilon$-coreset algorithm, $\textsc{SpectralClustering}$, and labelling $V$ \footnote{This can be done efficiently, exploiting sparsity, by first computing the norms of each $c_{w_H}^\phi(\pi_j)$.}. 
\end{theorem}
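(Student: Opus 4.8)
The plan is to bound $\NCcost_{A_G,D_G}(\Pi')$ by a single chain of inequalities that shuttles between graph space and kernel space through the equivalence~(\ref{eqn:traceequiv}), invokes the $\epsilon$-coreset guarantee once in each direction, and collects the $\alpha$ factor from \textsc{SpectralClustering}. Write $\phi:V\to\Hi$ and $w_G=\diag{D_G}$ for the feature map and weights of the kernel $k$-means instance equivalent to the normalised cut problem on $G$ (so $K_G=D_G^{-1}A_GD_G^{-1}$). Let $S$ with weights $w'$ denote the $\epsilon$-coreset, let $\Pi=\{\pi_j\}_{j=1}^k$ be the $\alpha$-approximate partition that \textsc{SpectralClustering} returns on the coreset graph, and let $\Pi^*$ be an optimal $k$-partition for the original normalised cut problem, so that $\cost_{w_G}^\phi(V,c_{w_G}^\phi(\Pi^*))=\NCcost_{A_G,D_G}(\Pi^*)=\NCopt_{A_G,D_G}(k)$ by the equivalence.

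First I would rewrite the algorithm's output in kernel space. Because $\Pi'$ labels each node of $V$ by its nearest implied center in $c_{w'}^\phi(\Pi)$, the equivalence gives $\NCcost_{A_G,D_G}(\Pi')=\cost_{w_G}^\phi(V,c_{w_G}^\phi(\Pi'))$, and Lemma~\ref{lemma:movetocentroid} — moving each center of the induced full partition to its centroid can only decrease the objective — yields
\[
    \NCcost_{A_G,D_G}(\Pi') = \cost_{w_G}^\phi(V,c_{w_G}^\phi(\Pi')) \le \cost_{w_G}^\phi(V,c_{w'}^\phi(\Pi)).
\]

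Next I would push this cost onto the coreset and absorb the approximation factor. Since $c_{w'}^\phi(\Pi)$ is a fixed set of $k$ centers, Definition~\ref{def:coreset} gives $\cost_{w_G}^\phi(V,c_{w'}^\phi(\Pi))\le(1-\epsilon)^{-1}\cost_{w'}^\phi(S,c_{w'}^\phi(\Pi))$, and the right-hand factor is exactly the coreset kernel $k$-means objective of $\Pi$, which equals the coreset normalised-cut objective of $\Pi$; the $\alpha$-guarantee therefore bounds it by $\alpha\cdot\Opt_k(S,\phi)$. To control the coreset optimum I would compare it against the single feasible center set $c_{w_G}^\phi(\Pi^*)$, giving $\Opt_k(S,\phi)\le\cost_{w'}^\phi(S,c_{w_G}^\phi(\Pi^*))$, then apply the coreset guarantee in the other direction, $\cost_{w'}^\phi(S,c_{w_G}^\phi(\Pi^*))\le(1+\epsilon)\cost_{w_G}^\phi(V,c_{w_G}^\phi(\Pi^*))$, and finish with the equality for $\Pi^*$ above. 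Multiplying the accumulated constants produces $\alpha\cdot\frac{1+\epsilon}{1-\epsilon}\cdot\NCopt_{A_G,D_G}(k)$, as required.

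I expect the main obstacle to be the fifth transition in Figure~\ref{fig:CSC}: turning the coreset partition, which labels only $S$, into a partition $\Pi'$ of all of $V$ in the absence of genuine graph cluster centers. One cannot in general achieve equality $\cost_{w_G}^\phi(V,c_{w_G}^\phi(\Pi'))=\cost_{w_G}^\phi(V,c_{w'}^\phi(\Pi))$, since the centroids of the induced full partition need not match the coreset centroids, so the whole argument must route through the one-sided inequality of Lemma~\ref{lemma:movetocentroid}. A secondary check is bookkeeping: every center set to which Definition~\ref{def:coreset} is applied (both $c_{w'}^\phi(\Pi)$ and $c_{w_G}^\phi(\Pi^*)$) must have exactly $k$ elements for the coreset guarantee to be in force, which holds provided the relevant partitions have no empty parts.
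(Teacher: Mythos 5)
Your proof is correct and takes essentially the same route as the paper's: the same three-part chain (equivalence plus Lemma~\ref{lemma:movetocentroid} plus the coreset guarantee in the $(1-\epsilon)$ direction, then the $\alpha$-guarantee on the coreset graph, then the coreset guarantee in the $(1+\epsilon)$ direction), which the paper packages as Lemma~\ref{lemma:allpreserved} and Lemma~\ref{lemma:optpreserved}. The only cosmetic difference is that you route the middle step through the center-based optimum $\Opt_k(S,\phi)$, implicitly using that it coincides with the partition-based optimum $\NCopt_{A_H,D_H}(k)$ (a fact that itself needs Lemma~\ref{lemma:movetocentroid}), whereas the paper makes this explicit by constructing the coreset partition $\Sigma'$ induced by the optimal full-graph centroids.
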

\begin{algorithm}[th!]
    \caption{\CSCAlg}
    \label{alg:csc}
\begin{algorithmic}[1]
\State{\textbf{Input:} Graph $G = (V, E)$, $k$ with adjacency matrix $A_G$ and degree matrix $D_G$}
\State{$K_G, W_G \gets D_G^{-1} A_G D_G^{-1}, D_G$}
\State{$V', W_H \gets$ An $\epsilon$-coreset for kernel  $k$-means on $(V,K_G,W_G)$}
\State{$A_H \gets W_H K(V') W_H$} \Comment{$K(V')$ is the principle submatrix of $K$ with respect to $V'$}
\State $\Pi \gets\textsc{SpectralClustering}(A_H,k)$ \Comment{$k$-partition $\{\pi_j\}_{j=1}^k$}
% \State $\Pi' \gets k$-partition $\Pi'= \{\pi_j'\}_{j=1}^k$ of $V$ such that  $x\in \pi_j'$ iff $\Delta(\phi(x),c^\phi_{w_H}(\Pi))=\Delta(\phi(x),c_{w_H}^\phi(\pi_j))$, breaking ties arbitrarily \Comment $w_H \triangleq \diag{W_H}$
% \State $\Pi' \gets$ partition $\{\pi_j'\}_{j=1}^k$ of $V$ such that  $x\in \pi_j'$ iff $\Delta(\phi(x),c^\phi_{w_H}(\Pi))=\Delta(\phi(x),c_{w_H}^\phi(\pi_j))$ \Comment $w_H \triangleq \diag{W_H}$
\State $\Pi' \gets$ partition assigning each $x \in V'$ to the closest coreset centroid in $c_{w_H}^\phi(\Pi)$.
\State \Return $\Pi'$
\end{algorithmic}
\end{algorithm}

%\Gcomment{Suggestion, dont }
To prove Theorem \ref{theorem:csc}, we show that the objective of any partition of $V'$ is preserved on $V$ (Lemma \ref{lemma:allpreserved}) and   the objective of any optimal partition of $V$ is preserved on $V'$ (Lemma \ref{lemma:optpreserved}). We defer their statements and proofs to Appendix \ref{appendix:supproofs}.

% \begin{lemma}
% \label{lemma:allpreserved}
% For any k-partition $\Pi=\{\pi_j\}_{j=1}^k$ of $V'$, we have that
% \begin{align*}
%     \NCcost_{A_G,D_G}(\Pi') \le \frac 1 {1-\epsilon} \NCcost_{A_H,D_H}(\Pi), 
% \end{align*}
% where $\Pi'=\{\pi_j'\}_{j=1}^k$ is the k-partition of $V$ such that $x\in \pi_j'$ iff $\Delta(\phi(x),c^\phi_{w_H}(\Pi))=\Delta(\phi(x),c_{w_H}^\phi(\pi_j))$, breaking ties arbitrarily.
% \end{lemma}

% \begin{lemma}
% \label{lemma:optpreserved}
%     Let $\Sigma=\{\sigma_j\}_{j=1}^k$ be an optimal k-partition of $V$ such that $\NCcost_{A_G,D_G}(\Sigma)=\NCopt_{A_G,D_G}(k)$. Then 
%     \begin{align*}
%         \NCcost_{A_H,D_H}(\Sigma') \le (1+\epsilon)\NCcost_{A_G,D_G}(\Sigma)
%     \end{align*}
%     where $\Sigma'=\{\sigma'_j\}_{j=1}^k$ is the k-partition of $V'$ such that $x\in \sigma_j'$ iff $\Delta(\phi(x),c_{w_G}^\phi(\Sigma)) = \Delta(\phi(x),c_{w_G}^\phi(\sigma_j))$, breaking ties arbitrarily.
% \end{lemma}
%\Gcomment{Can we just have $\Sigma$ without explicitly writing the elements?}
\begin{proof}[Proof of Theorem \ref{theorem:csc}]
    Let $\Sigma=\{\sigma_j\}_{j=1}^k$ be an optimal $k$-partition of $V$ such that $\NCcost_{A_G,D_G}(\Sigma)=\NCopt_{A_G,D_G}(k)$ and let $\Sigma'=\{\sigma'_j\}_{j=1}^k$ be the $k$-partition of $V'$ such that $x\in \sigma_j'$ iff $\Delta(\phi(x),c_{w_G}^\phi(\Sigma)) = \Delta(\phi(x),c_{w_G}^\phi(\sigma_j))$, breaking ties arbitrarily . Then we have
    \begin{align*}
        \NCcost_{A_G,D_G}(\Pi') \le \frac 1 {1-\epsilon} \NCcost_{A_H,D_H}(\Pi) \le \frac{\alpha}{1-\epsilon} \NCcost_{A_H,D_H}(\Sigma') \le \alpha\frac{1+\epsilon}{1-\epsilon} \NCcost_{A_G,D_G}(\Sigma), 
    \end{align*}
    where the first inequality follows from Lemma \ref{lemma:allpreserved}, the second holds because $\NCcost_{A_H,D_H}(\Pi)\le \alpha \NCopt_{A_H,D_H}(k)\le \alpha\NCcost_{A_H,D_H}(\Sigma')$, and the third follows from Lemma~\ref{lemma:optpreserved}.
\end{proof}

\section{Experiments}
\label{sec:exp}
% \Hcomment{write a short paragraph at the beginning of the section.}
We perform three experiments to compare our coreset algorithm against the naive method, and secondly, to compare our Coreset Spectral Clustering algorithm to coreset kernel $k$-means and the sklearn implementation of spectral clustering. Experiments were performed on a dual Intel Xeon E5-2690 system with 384GB of RAM. Our implementation was written in Rust using Faer~\cite{El_Kazdadi_faer}\footnote{A user friendly python wrapper is available here: \url{github.com/BenJourdan/coreset-sc}.}.
\subsection{Coreset Running Time Comparison}
We compare the running time of our improved coreset construction with that of \citet{jiang2022coresets}. In particular, we compare the running time of Algorithm \ref{alg:construct_coreset} using Algorithm \ref{alg:fastdz} (our method) for $D^2$-sampling against Algorithm \ref{alg:construct_coreset} using Algorithm \ref{alg:Dz_samplingold} \cite{jiang2022coresets} for $D^2$-sampling. We test these algorithms on the following three large graph datasets from the SuiteSparse Matrix Collection \cite{matrixcollection}. 
\begin{itemize}
    \item \textbf{Friendster}: A snapshot of the social media network with 65M nodes, 1.8B edges, and 5000 labeled overlapping communities. \item \textbf{LiveJournal}: A snapshot of the blogging network with 4M nodes, 34M edges, and 5000 labeled overlapping communities. \item \textbf{wiki-topcats}: A snapshot of the Wikipedia hyperlink graph with 2M nodes, 28M edges, and 17K overlapping page categories. 
\end{itemize}
We preprocess each graph to be undirected and construct $K=D^{-1}AD^{-1}$ and $W=D$ from the corresponding adjacency and degree matrices. We measure the time it takes each method to construct a 100K point coreset for the weighted kernel $k$-means problem on $K$ and $W$ while varying the number of clusters passed to the $D^2$-sampling routines.
Following \citet{jiang2022coresets}, we only do one iteration of importance sampling in Algorithm \ref{alg:construct_coreset}. Figure \ref{fig:exp1} confirms our method is significantly faster than the previous method. This matches our expectations based on the theoretical results since each graph is very sparse.

\begin{figure}[th!]
    \centering
    \includegraphics[width=\linewidth]{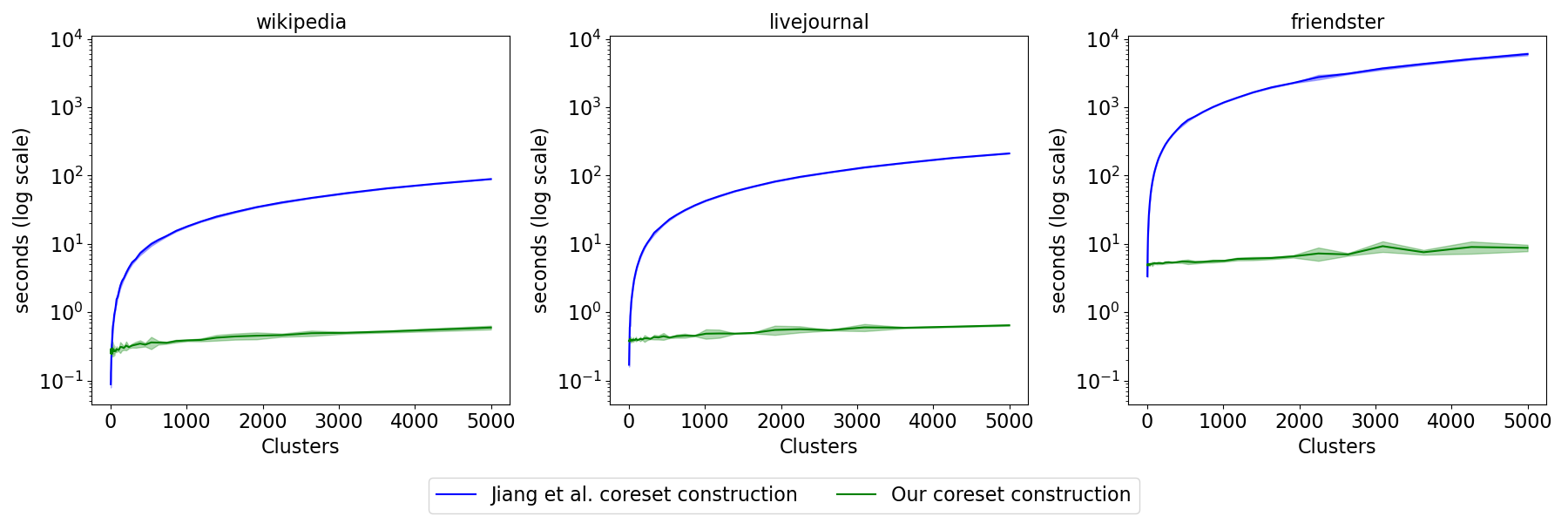}
    \caption{Running time comparison of coreset construction using either Algorithm \ref{alg:Dz_samplingold} \cite{jiang2022coresets} or Algorithm \ref{alg:fastdz} for $D^2$-sampling. Shaded regions denote 1 standard deviation over 10 runs.}
    \label{fig:exp1}
\end{figure}
\subsection{Clustering real-world datasets}
\label{exp:exp2}
We evaluate the clustering performance of our Coreset Spectral Clustering~(CSC) algorithm using our faster coreset construction, as the size of the coreset varies. We test the effect of using both the sklearn spectral clustering algorithm and the faster method of \citet{macgregorFastSpectralClustering2023} to cluster the coreset graphs. We compare against the sklearn implementation of Spectral Clustering (SC) and coreset kernel $k$-means using the naive coreset construction \cite{jiang2022coresets}. We measure each algorithm's running time and Adjusted Rand Index (ARI) \cite{rand1971objective} on nearest neighbour graphs of the following datasets: 
\begin{itemize}
\item \textbf{MNIST}: A labelled collection of 70,000 28x28 pixel images of handwritten digits \cite{lecun1998mnist}. \item \textbf{PenDigits}: 10,992 labeled instances of 2D pen movements covering the digits 0-9, each represented with 16 numerical features. \cite{anguita2013public}. 
\item \textbf{HAR}: A collection of 10,299 labeled smartphone sensor readings capturing six human movements, each represented 
as 561 numerical features \cite{anguita2013public}. 
\item \textbf{Letter}: A labelled collection of 20,000 instances of handwritten alphabetic english characters, represented as 16 numerical features \cite{letter}.
\end{itemize}
Figure \ref{fig:exp2har} shows the results for the HAR dataset, and demonstrates that the coreset methods run much faster than SC and both variants of CSC perform as well as SC using a coreset less than 5\% the size of the original dataset, in a fraction of a second. While increasing coreset size does help coreset kernel $k$-means, it struggles to escape local optima, reducing performance. The results for the other datasets are in Appendix \ref{appendix:realworld}.

\begin{figure}[th!]
    \centering
    \includegraphics[width=\linewidth]{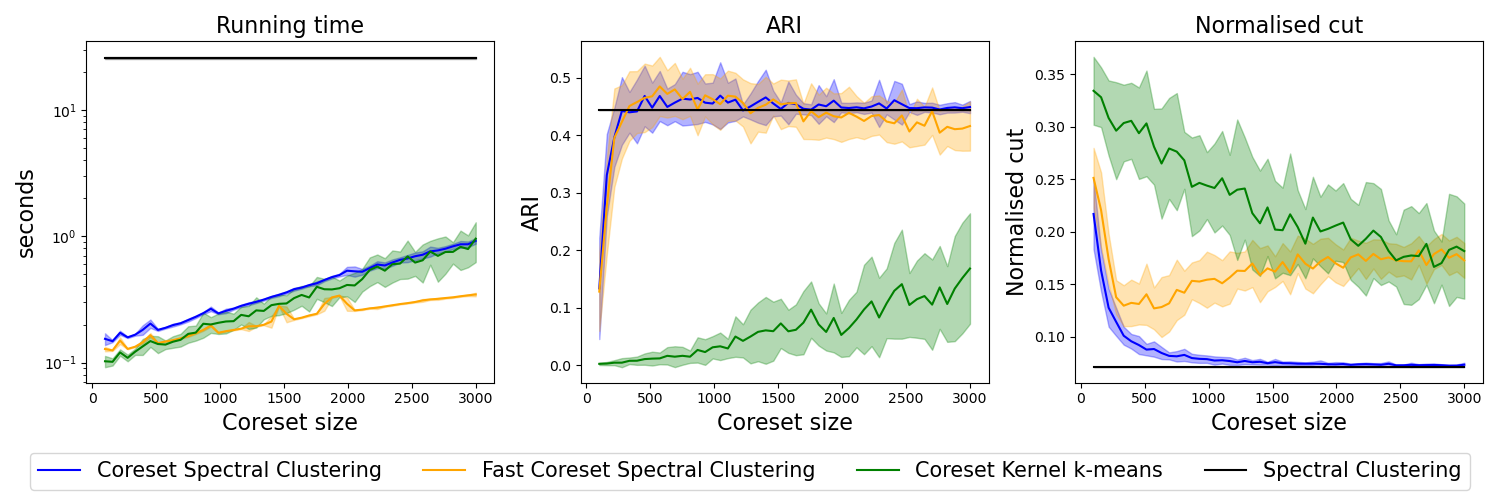}
    \caption{Running time, ARI, and Normalised cut of each algorithm on a 200-nearest neighbour graph of the HAR dataset. Shaded regions denote 1 standard deviation over 20 runs.}
    \label{fig:exp2har}
\end{figure}

\subsection{Clustering synthetic graphs with many clusters}

% \Hcomment{Put this as a subsection of the Experiments section}

Finally, we push CSC to the limit by tasking it to cluster the stochastic block model \cite{abbe2018community} when the number of clusters is proportional to the number of nodes. We sample graphs where the number of nodes in each cluster is 1000, the probability of an edge between two nodes in the same cluster is $0.5$ and the probability of an edge between two nodes from different clusters is $0.001/k$. We report the ARI of the coreset graph nodes instead of the full graph because the running time is otherwise dominated by labelling. We only consider the variant of CSC using the method of \citet{macgregorFastSpectralClustering2023} as computing hundreds of eigenvectors becomes prohibitively expensive. The coreset size is set to be $1\%$  the size of the input graph, implying that each cluster has an average of 10 nodes in the coreset graph. Figure \ref{fig:exp3} shows a gradual decline in performance of fast CSC as $k$ increases, probably due to fluctuations in the number of nodes the coreset samples per cluster. Nevertheless, it still achieves an ARI of 0.5 in less than 3 seconds on a graph with 250K nodes and 250 clusters. On the other hand, local optima render coreset kernel $k$-means useless after 50 clusters. 

\begin{figure}[th!]
    \centering
    \includegraphics[width=\linewidth]{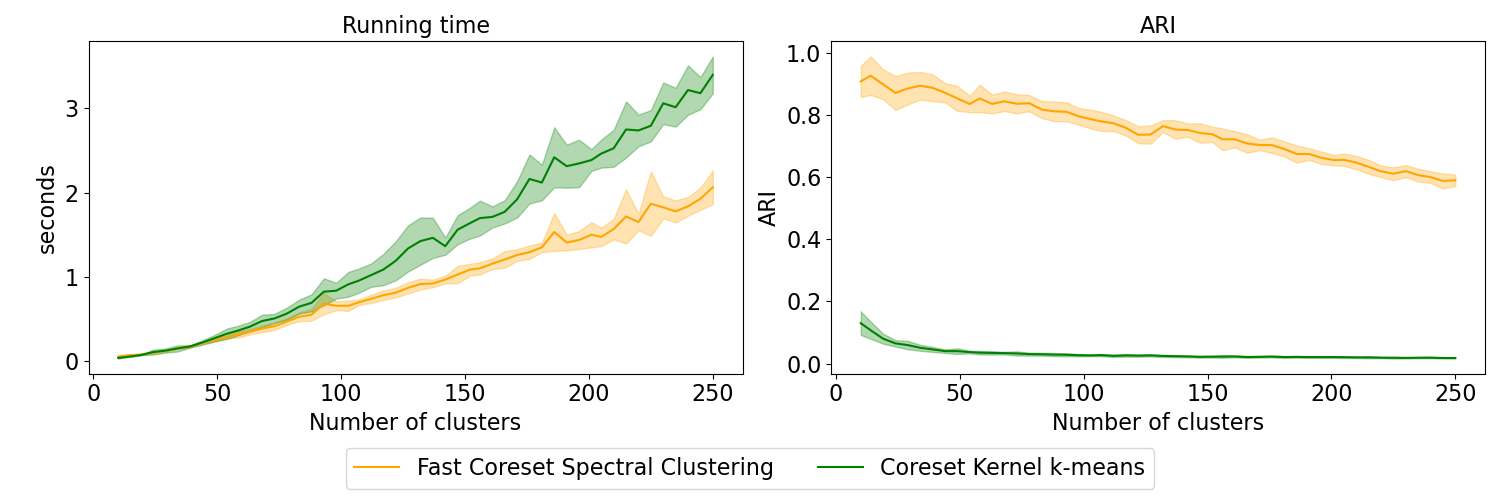}
    \caption{Running time and ARI of each algorithm on the stochastic block model with $k$ clusters of size $1000$, $p=1/2$, $q=0.001/k$ with a coreset size of $1\%$. Shaded regions denote 1 standard deviation over 20 runs.}
    \label{fig:exp3}
\end{figure}

\section{Acknowledgments}
The second author was supported by the following research grants: KAKENHI 21K17703, JST ASPIRE JPMJAP2302 and JST CRONOS Japan Grant Number JPMJCS24K2. The last author was supported by an EPSRC Early Career Fellowship (EP/T00729X/1). 

\small

\bibliographystyle{iclr2025_conference}
\bibliography{references.bib}

\newpage

\appendix

\section{Further experiments on real-world  datasets}
\label{appendix:realworld}

In this section, we report the experimental results omitted from Section~\ref{exp:exp2}.

\begin{figure}[th!]
    \centering
    \includegraphics[width=\linewidth]{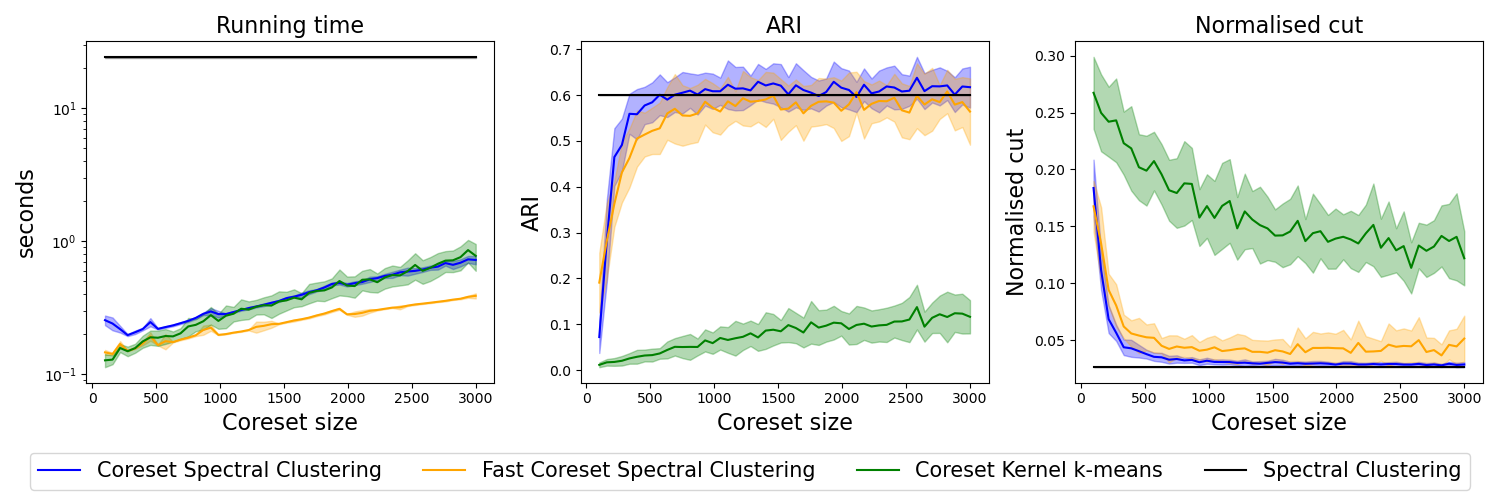}
    \caption{Running time, ARI, and Normalised cut of each algorithm on a 250-nearest neighbour graph of the PenDigits dataset as coreset size varies.}
    \label{fig:exppen}
\end{figure}

\begin{figure}[th!]
    \centering
    \includegraphics[width=\linewidth]{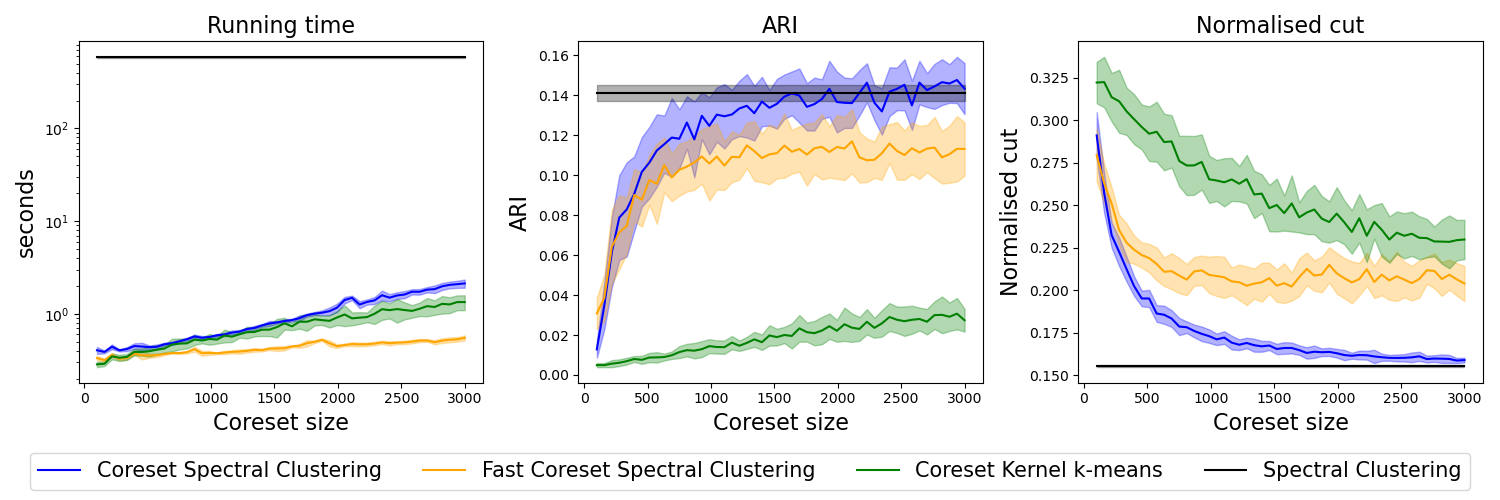}
    \caption{Running time, ARI, and Normalised cut of each algorithm on a 300-nearest neighbour graph of the Letter dataset as coreset size varies.}
    \label{fig:exp2letter}
\end{figure}

\begin{figure}[th!]
    \centering
    \includegraphics[width=\linewidth]{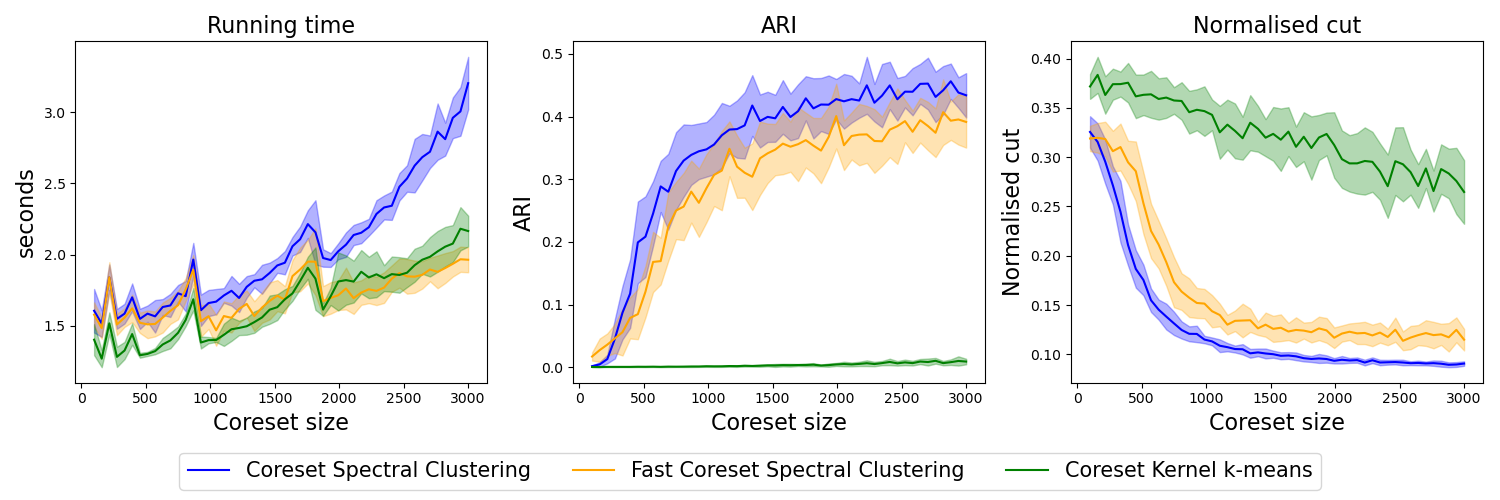}
    \caption{Running time, ARI, and Normalised cut of each algorithm on a 500-nearest neighbour graph of the MNIST dataset as coreset size varies. Sklearn spectral clustering was not included because it took too long.}
    \label{fig:exp2mnist}
\end{figure}

\newpage

\section{Supplementary lemmas and proofs}
In this section we provide the proofs omitted from Section \ref{sec:csc}.
\label{appendix:supproofs}
\begin{proof}[Proof of Lemma \ref{lemma:movetocentroid}]
\begin{align*}
\lefteqn{\sum_{u \in S} w(u) \|u - z\|^2}\\
&= \sum_{u \in S} w(u) \innerprod{u-z}{u-z}\\
&= \sum_{u \in S} w(u) \left\langle\Big((u-c(S)\Big)+\Big(c(S)-z\Big), \Big((u-c(S)\Big)+\Big(c(S)-z\Big)\right\rangle\\
&=\sum_{u \in S} w(u) \Big(\norm{u-c(S)}^2 + 2\innerprod{u-c(S)}{c(S)-z} + \norm{c(S)-z}^2\Big)\\
&=\sum_{u \in S} w(u) \norm{u-c(S)}^2 +2 \left\langle c(S)-z ,\sum_{u\in S}w(u)\Big(u-c(S)\Big)\right\rangle +\abs{S}\norm{c(S) - z}^2 \sum_{x\in S}w(x)\\
&= \sum_{x\in S} w(x)\norm{x-c(S)}^2 + \abs{S}\norm{c(S) - z}^2 \sum_{x\in S}w(x)
\end{align*}
where last step follows from the fact that $c(S)$ is the weighted centroid of $S$, so $\sum_{u \in S} w(u) (u - c(S)) = 0$.
\end{proof}

\begin{lemma}[Kernel $k$-means objective equivalence]
    \label{lemma:kkmeansequiv}
    Given a set of $n$ objects $X$, $w:X\to \R_+$, $K:X\times X\to \R_{\ge 0}$ and $\phi:X\to \Hi$ satisfying \eqref{eqn:featuremap}, for any $k$-partition $\Pi=\{\pi_j\}_{j=1}^k$ of $X$, it holds that
    \begin{align}
        \cost_{K,W}(\Pi) = \cost^{\phi}_w(X,\{m_j\}_{j=1}^k) \label{eqn:target},
    \end{align}
    where $m_j=\frac{\sum_{a\in \pi_j} w(a) \phi(a)}{\sum_{b\in \pi_j}w(b)}$ and $W=\Diag{w}$.
\end{lemma}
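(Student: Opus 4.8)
The plan is to prove the identity by directly expanding both sides into sums of kernel inner products and verifying that they agree term by term; no appeal to the trace/partition duality is needed, only the algebraic definitions of $\cost_{K,W}$ and $\cost_w^\phi$ together with $\innerprod{\phi(x)}{\phi(y)}=K_{xy}$. Throughout I would write $W_j \triangleq \sum_{x\in\pi_j} w(x)$ for the total weight of cluster $j$, and note that since $m_j = c_w^\phi(\pi_j)$ the right-hand side is exactly the partition objective $\sum_{j=1}^k\sum_{x\in\pi_j} w(x)\norm{\phi(x)-m_j}^2$.

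First I would unfold the left-hand side. The membership matrix $\mathcal{X}$ makes $\mathcal{X}^T W \mathcal{X}=\Diag{W_1,\dots,W_k}$, so $(\mathcal{X}^T W\mathcal{X})^{-1/2}$ is diagonal with entries $W_j^{-1/2}$. Substituting $Y=W^{1/2}\mathcal{X}(\mathcal{X}^T W\mathcal{X})^{-1/2}$ collapses the half-powers of $W$ and yields $\tr(Y^T W^{1/2}KW^{1/2}Y)=\tr\big((\mathcal{X}^T W\mathcal{X})^{-1}\mathcal{X}^T WKW\mathcal{X}\big)$. Reading off the $j$-th diagonal entry of $\mathcal{X}^T WKW\mathcal{X}$ as $\sum_{a,b\in\pi_j} w(a)w(b)\innerprod{\phi(a)}{\phi(b)}$, and using $\tr(WK)=\sum_{x\in X} w(x)\norm{\phi(x)}^2$, this gives
\begin{align*}
    \cost_{K,W}(\Pi) = \sum_{x\in X} w(x)\norm{\phi(x)}^2 - \sum_{j=1}^k \frac{1}{W_j}\sum_{a,b\in\pi_j} w(a)w(b)\innerprod{\phi(a)}{\phi(b)}.
\end{align*}

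Next I would expand the right-hand side. Expanding $\norm{\phi(x)-m_j}^2$ and inserting $m_j=\tfrac{1}{W_j}\sum_{a\in\pi_j} w(a)\phi(a)$, the cross term sums to $-\tfrac{2}{W_j}\sum_{a,b\in\pi_j} w(a)w(b)\innerprod{\phi(a)}{\phi(b)}$ while the $\norm{m_j}^2$ term, weighted by $\sum_{x\in\pi_j} w(x)=W_j$, sums to $+\tfrac{1}{W_j}\sum_{a,b\in\pi_j} w(a)w(b)\innerprod{\phi(a)}{\phi(b)}$; these combine into a single $-\tfrac{1}{W_j}$ self-affinity term, reproducing the displayed expression exactly. (Lemma~\ref{lemma:movetocentroid}, applied along the feature directions, could absorb this step, but the direct computation is equally short.) Matching the two expansions finishes the proof. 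The only real obstacle is bookkeeping on the left-hand side: correctly simplifying the nested $W^{1/2}$ factors inside $Y^T W^{1/2}KW^{1/2}Y$ and recognising the diagonal structure of $\mathcal{X}^T W\mathcal{X}$; once both sides collapse to the same weighted self-affinity sum, equality is immediate.
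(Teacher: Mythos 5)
Your proposal is correct and follows essentially the same route as the paper's proof: both rely on the diagonal structure of $\mathcal{X}^TW\mathcal{X}$, the identification $\tr(WK)=\sum_x w(x)\norm{\phi(x)}^2$, and the observation that the cross term and the $\norm{m_j}^2$ term collapse into a single per-cluster self-affinity sum $\tfrac{1}{W_j}\sum_{a,b\in\pi_j}w(a)w(b)\innerprod{\phi(a)}{\phi(b)}$, which is exactly the $j$-th diagonal entry of $\mathcal{X}^TWKW\mathcal{X}(\mathcal{X}^TW\mathcal{X})^{-1}$. The only difference is cosmetic: the paper expands the feature-space side until it reaches the trace form, whereas you simplify both sides to a common middle expression.
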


\begin{proof}
    Let $X\in \{0,1\}^{n\times k}$ be the unique set membership matrix corresponding to $\Pi$; that is, $X1_k=1_n$ and $X(i,j)= 1$ iff $x_i\in \pi_j$. Therefore $X^TWX= \Diag{s_1,\dots, s_k}$ where $s_i = \sum_{a\in \pi_i} w(v)$. Expanding the right hand side of \eqref{eqn:target}, we have
    \begin{align}
    \cost^\phi_w(X,\{m_j\}_{j=1}^k)&\triangleq \sum_{j=1}^k \sum_{a\in \pi_j} w(a) \norm{\phi(a)-m_j}^2 \nonumber\\
    &=\sum_{j=1}^k \sum_{a\in \pi_j} w(a) \Big( \innerprod{\phi(a)}{\phi(a)} -2\innerprod{\phi(a)}{m_j} + \innerprod{m_j}{m_j}\Big) \label{eqn:equivproof}
\end{align}
Notice that $\sum_{j=1}^k \sum_{a\in \pi_j} w(a) \innerprod{\phi(a)}{\phi(a)}= \sum_{j=1}^k \sum_{a\in \pi_j} w(a)K_{a,a}=\tr(WK)$ since $\Pi$ is a partition of $X$. Expanding the other terms of \eqref{eqn:equivproof} for the $j$th cluster, we find they are multiples of the same quantity:
\begin{align*}
    \sum_{a\in \pi_j} w(a)\innerprod{\phi(a)}{m_j} =  \sum_{a\in \pi_j}\sum_{b \in \pi_j}\frac{w(a)w(b)\innerprod{\phi(a)}{\phi(b)}}{\sum_{c\in \pi_j}w(c)},
\end{align*}
and
\begin{align*}
    \sum_{a\in \pi_j} w(a)\innerprod{m_j}{m_j} = \sum_{a\in \pi_j} w(a) \frac{\sum_{b\in \pi_j}\sum_{c\in \pi_j} w(b)w(c) \innerprod{\phi(b)}{\phi(c)}}{\Big(\sum_{d\in \pi_j}w(d)\Big)^2} = \sum_{a\in \pi_j} \sum_{b\in \pi_j} \frac{w(a)w(b)\innerprod{\phi(a)}{\phi(b)}}{\sum_{c\in \pi_j} w(c)}, 
\end{align*}
Finally, we show that these quantities for each cluster are the diagonal entries of the matrix $X^TWKWX(X^TWX)^{-1}$:
\begin{align*}
    [X^TWKWX(X^TWX)^{-1}]_{jj}&=[X^TWKWX]_{jj}[(X^TWX)^{-1}]_{jj}\\
    &=\Big(\sum_{a}\sum_{b}X_{aj}X_{aj}[WKW]_{ab}\Big)\Big(\frac 1 {\sum_{b\in \pi_j} w(b)}\Big)\\
    &=\Big(\sum_{a \in \pi_j}\sum_{b\in \pi_j}[WKW]_{ab}\Big)\Big(\frac 1 {\sum_{a\in \pi_j} w(a)}\Big)\\
    &= \sum_{a\in \pi_j}\sum_{b \in \pi_j}\frac{w(a)w(b)\innerprod{\phi(a)}{\phi(b)}}{\sum_{c\in \pi_j}w(c)}.
\end{align*}
Thus $\cost^\phi_w(X,\{m_j\}) = \tr(WK) - \tr(X^TWKWX(X^TWX)^{-1}) = \cost_{K,W}(\Pi)$.

\end{proof}

\begin{lemma}
\label{lemma:allpreserved}
For any k-partition $\Pi=\{\pi_j\}_{j=1}^k$ of $V'$, we have that
\begin{align*}
    \NCcost_{A_G,D_G}(\Pi') \le \frac 1 {1-\epsilon} \NCcost_{A_H,D_H}(\Pi), 
\end{align*}
where $\Pi'=\{\pi_j'\}_{j=1}^k$ is the k-partition of $V$ such that $x\in \pi_j'$ iff $\Delta(\phi(x),c^\phi_{w_H}(\Pi))=\Delta(\phi(x),c_{w_H}^\phi(\pi_j))$, breaking ties arbitrarily.
\end{lemma}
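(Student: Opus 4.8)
The plan is to reduce both sides of the inequality to kernel $k$-means costs in feature space and then invoke the coreset guarantee a single time. First I would translate the two normalised-cut objectives into kernel-space objectives via the equivalence~(\ref{eqn:traceequiv}). For the coreset graph $H$, the construction $A_H = W_H K(V') W_H$ together with the reverse substitution $D_H = W_H$ makes $\NCcost_{A_H,D_H}(\Pi)$ coincide with the weighted kernel $k$-means objective $\cost_{K(V'),W_H}(\Pi)$, and Lemma~\ref{lemma:kkmeansequiv} rewrites this as $\cost_{w_H}^\phi(V', c_{w_H}^\phi(\Pi))$. Likewise, with $K_G = D_G^{-1}A_G D_G^{-1}$ and $W_G = D_G$ we get $\NCcost_{A_G,D_G}(\Pi') = \cost_{w_G}^\phi(V, c_{w_G}^\phi(\Pi'))$. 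So the claim becomes the purely kernel-space statement $\cost_{w_G}^\phi(V, c_{w_G}^\phi(\Pi')) \le \frac{1}{1-\epsilon}\cost_{w_H}^\phi(V', c_{w_H}^\phi(\Pi))$.

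Next I would introduce the coreset centroids $S = c_{w_H}^\phi(\Pi) = \{s_j\}_{j=1}^k$ as an explicit set of $k$ centers in $\Hi$. Since $\Pi'$ assigns each $x\in V$ to the $s_j$ minimising $\Delta(\phi(x),\cdot)$, the move-to-centroid Lemma~\ref{lemma:movetocentroid} gives $\cost_{w_G}^\phi(V, c_{w_G}^\phi(\Pi')) \le \cost_{w_G}^\phi(V, S)$: replacing the centers $S$ by the true weighted centroids of the partition they induce can only decrease the objective. This is exactly the inequality anticipated in the discussion preceding the theorem.

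The final step applies the $\epsilon$-coreset property of $V'$ (Definition~\ref{def:coreset}) to the center set $S$, which is admissible precisely because $\abs{S}=k$. This yields $\cost_{w_H}^\phi(V',S) \ge (1-\epsilon)\cost_{w_G}^\phi(V,S)$, equivalently $\cost_{w_G}^\phi(V,S) \le \frac{1}{1-\epsilon}\cost_{w_H}^\phi(V',S)$. Because $S$ is exactly $c_{w_H}^\phi(\Pi)$, the right-hand side equals $\frac{1}{1-\epsilon}\cost_{w_H}^\phi(V', c_{w_H}^\phi(\Pi))$, and chaining the three inequalities closes the argument.

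I expect the main obstacle to be bookkeeping the two directions of the normalised-cut / kernel $k$-means dictionary cleanly, in particular verifying that the coreset graph's normalising matrix is $D_H = W_H$ so that $\NCcost_{A_H,D_H}$ genuinely matches the weighted cost $\cost_{w_H}^\phi$ computed with the coreset weights. Once this translation is in place, the substantive content is only Lemma~\ref{lemma:movetocentroid} followed by one use of the coreset inequality; a minor but essential point to check is that passing from the centers $S$ to the centroids of the partition they induce does not change the cardinality $k$, so the coreset guarantee still applies.
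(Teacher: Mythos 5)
Your proposal is correct and follows essentially the same route as the paper's own proof: translate both normalised-cut objectives to kernel space via Lemma~\ref{lemma:kkmeansequiv}, use Lemma~\ref{lemma:movetocentroid} (together with the definition of $\Pi'$) to bound $\cost_{w_G}^\phi(V, c_{w_G}^\phi(\Pi'))$ by $\cost_{w_G}^\phi(V, c_{w_H}^\phi(\Pi))$, and apply the $\epsilon$-coreset guarantee exactly once to the $k$-element center set $c_{w_H}^\phi(\Pi)$. The only difference is the order in which you chain the inequalities, which is immaterial.
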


\begin{proof}[Proof of Lemma \ref{lemma:allpreserved}]
    We have $\NCcost_{A_H,D_H}(\Pi) = \cost_{K_H,W_H}(\Pi)= \cost_{w_H}^\phi(V',c^\phi_{w_H}(\Pi))$. Since $V'$ and $w_H$ constitute an $\epsilon$-coreset, we have that $\cost_{w_G}^\phi(V,c^\phi_{w_H}(\Pi)) \le \frac 1 {1-\epsilon} \cost_{w_H}^\phi(V',c^\phi_{w_H}(\Pi))$. Then by the definition of $\Pi'$ and Lemma \ref{lemma:movetocentroid}, $\cost_{w_G}^\phi(V,c^\phi_{w_G}(\Pi')) \le \cost_{w_G}^\phi(V,c^\phi_{w_H}(\Pi))$. Since $\NCcost_{A_G,D_G}(\Pi')=\cost_{K_G,D_G}(\Pi')=\cost_{w_G}^\phi(V,c^\phi_{w_G}(\Pi'))$, the claim follows.
\end{proof}

\begin{lemma}
\label{lemma:optpreserved}
    Let $\Sigma=\{\sigma_j\}_{j=1}^k$ be an optimal k-partition of $V$ such that $\NCcost_{A_G,D_G}(\Sigma)=\NCopt_{A_G,D_G}(k)$. Then 
    \begin{align*}
        \NCcost_{A_H,D_H}(\Sigma') \le (1+\epsilon)\NCcost_{A_G,D_G}(\Sigma)
    \end{align*}
    where $\Sigma'=\{\sigma'_j\}_{j=1}^k$ is the k-partition of $V'$ such that $x\in \sigma_j'$ iff $\Delta(\phi(x),c_{w_G}^\phi(\Sigma)) = \Delta(\phi(x),c_{w_G}^\phi(\sigma_j))$, breaking ties arbitrarily.
\end{lemma}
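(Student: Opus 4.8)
The plan is to mirror the argument used for Lemma~\ref{lemma:allpreserved}, but run the comparison in the opposite direction---from the full vertex set $V$ onto the coreset $V'$---and invoke the upper-bound side of the $\epsilon$-coreset guarantee. The three ingredients I would use are the objective equivalence of Lemma~\ref{lemma:kkmeansequiv} (which rewrites each normalised-cut objective as a weighted kernel $k$-means objective against a fixed set of centers), the coreset property from Definition~\ref{def:coreset}, and Lemma~\ref{lemma:movetocentroid} (replacing a cluster's centroid by any other center can only increase the weighted cost).

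First I would rewrite the left-hand side with the equivalence on the coreset graph $H$ (where $K_H = K(V')$ and $W_H = D_H$), giving
\begin{align*}
    \NCcost_{A_H,D_H}(\Sigma') = \cost_{w_H}^\phi(V', c_{w_H}^\phi(\Sigma')).
\end{align*}
Since $\Sigma'$ is by definition the partition of $V'$ assigning each point to its nearest center among $c_{w_G}^\phi(\Sigma)$, the partition-based cost against the centroids $c_{w_H}^\phi(\Sigma')$ is at most the cost measured against the fixed center set $c_{w_G}^\phi(\Sigma)$; this is Lemma~\ref{lemma:movetocentroid} applied cluster by cluster, so
\begin{align*}
    \cost_{w_H}^\phi(V', c_{w_H}^\phi(\Sigma')) \le \cost_{w_H}^\phi(V', c_{w_G}^\phi(\Sigma)).
\end{align*}
Next I would apply the coreset guarantee with the fixed set $C = c_{w_G}^\phi(\Sigma)$, which has exactly $k$ elements, to pass from $V'$ back to $V$ at a multiplicative cost of $(1+\epsilon)$, and finally translate back with the equivalence on $G$:
\begin{align*}
    \cost_{w_H}^\phi(V', c_{w_G}^\phi(\Sigma)) \le (1+\epsilon)\,\cost_{w_G}^\phi(V, c_{w_G}^\phi(\Sigma)) = (1+\epsilon)\,\NCcost_{A_G,D_G}(\Sigma).
\end{align*}
Chaining these three displays yields the claim.

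The step needing the most care is not any single computation but getting the two sides of the coreset inequality pointed correctly: here I need the upper bound $\cost_{w_H}^\phi(V',C) \le (1+\epsilon)\cost_{w_G}^\phi(V,C)$ (the opposite of the $1/(1-\epsilon)$ lower bound used in Lemma~\ref{lemma:allpreserved}), and the fixed center set against which I invoke the guarantee must have cardinality exactly $k$ so that Definition~\ref{def:coreset} applies. The other point to verify is that the nearest-center definition of $\Sigma'$ makes $\cost_{w_H}^\phi(V', c_{w_G}^\phi(\Sigma))$ equal to the sum of per-cluster weighted squared distances to the corresponding center of $\Sigma$, which is precisely the form that Lemma~\ref{lemma:movetocentroid} lower-bounds by the centroid cost.
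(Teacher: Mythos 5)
Your proposal is correct and matches the paper's own proof essentially step for step: both rewrite $\NCcost_{A_H,D_H}(\Sigma')$ as $\cost_{w_H}^\phi(V',c_{w_H}^\phi(\Sigma'))$ via the objective equivalence, bound it by $\cost_{w_H}^\phi(V',c_{w_G}^\phi(\Sigma))$ using the nearest-center definition of $\Sigma'$ together with Lemma~\ref{lemma:movetocentroid}, and then apply the $(1+\epsilon)$ upper-bound side of the coreset guarantee to the fixed $k$-element center set $c_{w_G}^\phi(\Sigma)$ before translating back on $G$. The points you flag for care (the direction of the coreset inequality and the cardinality-$k$ requirement in Definition~\ref{def:coreset}) are exactly the right ones, and are handled the same way in the paper.
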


\begin{proof}[Proof of Lemma \ref{lemma:optpreserved}]
    We have that $\NCcost_{A_G,D_G}(\Sigma) = \cost_{w_G}^\phi(V,c^\phi_{w_G}(\Sigma))$. Since $V'$ and $w_H$ constitute an $\epsilon$-coreset, we have that $\cost_{w_H}^\phi(V',c^\phi_{w_G}(\Sigma))\le(1+\epsilon)\cost_{w_G}^\phi(V,c^\phi_{w_G}(\Sigma))$. From the definition of $\Sigma'$ and Lemma \ref{lemma:movetocentroid}, we have $\NCcost_{A_H,A_G}(\Sigma') = \cost_{w_H}^\phi(V',c_{w_H}^\phi(\Sigma')) \le \cost_{w_H}^\phi(V',c_{w_G}^\phi(\Sigma))$. The claim follows.
\end{proof}

\section{Algorithms}
In this section we provide the coreset kernel $k$-means algorithm given by \citet{jiang2022coresets} along with our subroutines for Algorithm \ref{alg:fastdz}.
\subsection{Coreset Kernel $k$-means algorithms \cite{jiang2022coresets}}
\label{appendix:coreset}
\begin{algorithm}[H]
    \caption{Constructing an $\epsilon$-coreset for kernel $k$-means on dataset $X$ with kernel $K$ \cite{jiang2022coresets}}
    \label{alg:construct_coreset} 
\begin{algorithmic}[1]
    \State \textbf{Input:} $X_0 \gets X$, $i \gets 0$
    \Repeat
        \State $i \gets i + 1$ and $\epsilon_i \gets \epsilon / (\log^{(i)} \|X_0\|)^{1/4}$ \Comment $\log^{(i)}(\cdot)$ is the $i$th iterated logarithm.
        \State $X_i \gets \textsc{Importance-Sampling}(X_{i-1}, \epsilon_i)$ \Comment Algorithm \ref{alg:importance_sampling}
    \Until{$\|X_i\|_0$ does not decrease compared to $\|X_{i-1}\|_0$}
    \State \textbf{return} $X_i$
\end{algorithmic}
\end{algorithm}

\begin{algorithm}[H]
    \caption{Importance-Sampling($X, \epsilon$) \cite{jiang2022coresets}}
    \label{alg:importance_sampling}
\begin{algorithmic}[1]
    \State \textbf{Input:} $X$, $\epsilon$
    \State Let $C^* \gets D^2$-\texttt{Sampling}($X$) \Comment Algorithm \ref{alg:Dz_samplingold} or Algorithm \ref{alg:fastdz} (our faster algorithm)
    \For{$x \in X$}
        \State $\sigma_x \gets \frac{w_X(x) \cdot \Delta(x,C^*)}{w_X(C^)}$
    \EndFor
    \For{$x \in X$}
        \State $p_x \gets \frac{\sigma_x}{\sum_{y \in X} \sigma_y}$
    \EndFor
    \State Draw $N \gets O\left( \frac{ k^2 \log^2 k \|X\|_0}{\epsilon^2} \right)$ i.i.d. samples from $X$, using probabilities $(p_x)_{x \in X}$
    \State Let $D$ be the sampled set; for each $x \in D$ let $w_D(x) \gets \frac{w_X(x)}{p_x N}$
    \State \textbf{return} weighted set $D$
\end{algorithmic}
\end{algorithm}

\subsection{Fast $D^2$-sampling subroutines}

\begin{algorithm}[H]
    \caption{ \ConstructT}
    \label{alg:construct}
\begin{algorithmic}[1]
    \State{\textbf{Input:} $X$ s.t. $\abs{X} = n, K\in S^n_{++}, W\in \R_+^n$}
    \State{\texttt{current\_level} $\gets [\ ]$}
    \For{$x\in X$}
        \State{\texttt{leaf}$\gets$ a leaf node corresponding to $x$ with attribute $\Delta\triangleq \innerprod{\phi(x)}{\phi(x)} + c^*$} \State{\texttt{current\_level.push(leaf)}}
    \EndFor
    \While{\texttt{current\_level.len}()$>1$}
        \State{\texttt{NextLevel} $\gets [\ ]$}
        \For{$i=1$ to $2\lfloor \frac{\texttt{current\_level.len}()}{2}\rfloor -1$}
            \State{\texttt{left\_child} $\gets$ \texttt{current\_level}$[i]$}
            \State{\texttt{right\_child} $\gets$ \texttt{current\_level}$[i+1]$}
            \State\texttt{internal} $\gets$ an internal node with \texttt{left\_child} and \texttt{right\_child} as their respective children and attribute \texttt{contribution} equal to the sum of the contribution of its children.
            \State{\texttt{next\_level.push(internal)}}
        \EndFor
        \If{\texttt{current\_level.len()} is odd}
        \State{\texttt{next\_level.push(current\_level[current\_level.len()])}}
        \EndIf
        \State{\texttt{current\_level} $\gets$ \texttt{next\_level}}
    \EndWhile
    \State{\textbf{\Return \texttt{current\_level[1]}}}
\end{algorithmic}
\end{algorithm}

\begin{algorithm}[H]
    \caption{\repair{$x$,$T$}}
    \label{alg:repair}
\begin{algorithmic}[1]
    \State Let $L$ be the leaf node corresponding to $x$
    \State \texttt{delta\_difference} $\gets L.\Delta$
    \State $L.\Delta \gets 0$
    \For{Internal node $I$ in the path from $L$ to the root of $T$}
    \State $I$.\texttt{contribution} $\gets$ $I$.\texttt{contribution} $-$ \texttt{delta\_difference} $\times$ $w(x)$
    \EndFor
    \For{$y$ in $N(x)$}
    \State Let $L'$ be the leaf node corresponding to $y$
    \If{$\Delta(x,y)<L'.\Delta$}
    \State \texttt{delta\_difference} $\gets L'.\Delta - \Delta(x,y)$
    \State $L.\Delta \gets \Delta(x,y)$
    \For{Internal node $I$ in the path from $L'$ to the root of $T$}
    \State $I$.\texttt{contribution} $\gets$ $I$.\texttt{contribution} $-$ \texttt{delta\_difference} $\times$ $w(y)$
    \EndFor
    \EndIf
    \EndFor
\end{algorithmic}
\end{algorithm}

\begin{algorithm}[H]
    \caption{ \sample}
    \label{alg:samplep}
\begin{algorithmic}[1]
    \State{\textbf{Input:} $T$}
    \State Let $I$ be the root of $T$.
    \While{$I$ is an internal node}
    \State Let $C_1$ and $C_2$ be the children of $I$.
    \State \texttt{child} $\gets$ Select $C_1$ or $C_2$ with probabilities proportional to $C_1$.\contributionv\  and $C_2$.\contributionv \Comment{If $I$ only has one child, sample it with probability $1$.}
    \State $I \gets $ \texttt{child}
    \EndWhile
    \State \Return the data point associated with $I$.
\end{algorithmic}
\end{algorithm}

\end{document}